\documentclass[journal,twoside,web]{ieeecolor}
\usepackage{generic}
\usepackage{cite}
\usepackage{amsmath,amssymb,amsfonts}
\usepackage{algorithmic}
\usepackage{graphicx}
\usepackage{algorithm,algorithmic}
\usepackage{hyperref}
\usepackage{textcomp}
\usepackage{subcaption}
\usepackage{booktabs}
\usepackage{mathrsfs}
\usepackage{multirow}
\usepackage{diagbox}
\usepackage[export]{adjustbox} 
\usepackage{makecell}
\usepackage{lettrine}

\newtheorem{Rem}{Remark}	
	
\newtheorem{Thm}{Theorem}	
\newtheorem{Cor}{Corollary}	
\def\BibTeX{{\rm B\kern-.05em{\sc i\kern-.025em b}\kern-.08em
		T\kern-.1667em\lower.7ex\hbox{E}\kern-.125emX}}
\begin{document}
	\title{Rapid Vibration Suppression and Trajectory Tracking of a Serial Manipulator with Multi-Flexible Links}
	\author{Chengyi Wang, Yilong Huang and Ji Wang
		\thanks{C. Wang, Y. Huang and J. Wang are with the School of Aerospace Engineering, Xiamen University, Xiamen 361102, P. R. China (e-mail: 23220231151773@stu.xmu.edu.cn, huangyilong@stu.xmu.edu.cn, jiwang@xmu.edu.cn). }}
	\maketitle
	
	\begin{abstract}
		Flexible robotic manipulators (FRMs) offer advantages in lightweight design and large workspace, but their structural flexibility induces vibrations, accelerates fatigue, degrades tracking performance, and limits operational speed. These challenges are further amplified in multi-link serial manipulators, where increased overall length leads to greater structural flexibility. This article presents a backstepping output-feedback framework for fast vibration suppression and tip tracking of an n-degree-of-freedom serial flexible manipulator robot (nDSFMR). Each link-joint is modeled as a Timoshenko beam coupled with an ODE and transformed into a canonical hyperbolic PDE with boundary dynamics. A backstepping-based boundary controller at the joint is developed to equivalently inject distributed damping along the beam, enabling rapid vibration suppression and trajectory tracking, only using available boundary measurements. Experiments on a two-link flexible manipulator demonstrate faster vibration suppression and convergence of the end-effector to the desired trajectory, compared with a linear quadratic regulator (LQR) with feedforward control.
	\end{abstract}
	
	\begin{IEEEkeywords}
		Flexible robotic manipulator; Distributed parameter systems; Boundary control; Backstepping.
	\end{IEEEkeywords}
	
	\section{Introduction}
	\label{sec:introduction}
	
	\subsection{Background and motivation}
	
	Robotic manipulators are ubiquitous in industrial applications.  Research in this field is generally categorized into two domains: rigid manipulators and flexible robotic manipulators(FRMs).
	Current research increasingly favors FRMs due to their distinctive advantages, including light weight, lower energy consumption, and high power-to-weight ratios \cite{bib12}, \cite{bib14}.
	
	The demand for lightweight manipulators has surged across diverse sectors, including the maintenance of explosive workspaces in satellite deployment and space station operations \cite{bib16}, high-precision semiconductor production \cite{bib17}, and surgical procedures where compact size and cost-effectiveness are crucial \cite{bib18}. Despite their advantages, these high-aspect-ratio lightweight manipulators suffer from pronounced structural flexibility, which not only degrades end-effector accuracy but also triggers persistent vibrations and accelerates structural fatigue. From a theoretical perspective, characterizing such systems is mathematically intensive, and incorporating elasticity yields complex governing equations with coupled spatio-temporal dynamics \cite{bib19}, making high-precision control of Flexible Robotic Manipulators (FRMs) particularly challenging under boundary actuation and sensing.
	
	\subsection{Mathematically Modeling}
	
	The dynamic behavior of FRMs has been extensively investigated over the past few decades \cite{bib15}.
	The governing equations of FRMs are approximated as finite-dimensional models by using some discretization techniques, such as the assumed mode method (AMM) \cite{bib25}, the finite element method (FEM) \cite{bib26}, or the lumped-parameter method \cite{bib24}. A novel approach was proposed in \cite{bib60} for linearization of equations of motion of FRMs leveraging screw theory and dual linear algebra for the derivations.
	
	To accurately represent complex flexibility and spatially continuous deformation, infinite-dimensional distributed parameter models are essential. The dynamics of such systems are typically derived via Hamilton's principle, which provides a systematic variational framework \cite{bib27}, \cite{bib28}. For instance, models of one-link and two-link flexible manipulators were derived using Hamilton's principle in \cite{bib29}, \cite{bib30}, respectively. Furthermore, the work in \cite{bib59} employs port-Hamiltonian formalism to model complex mechanical systems comprising both rigid bodies and flexible links.
	This principle has been further applied to establish detailed Timoshenko beam models for robotic arms, including the well-known Canadarm2, deployed on the International Space Station \cite{bib31}. Other effective modeling frameworks include Lagrangian equations \cite{bib32}, Newton-Euler equations \cite{bib33}, and Kane's method \cite{bib34}, are also good choices for modeling.

	\subsection{Control Strategies}
	Various control strategies have been developed for FRM models of FRMs \cite{bib23}, \cite{bib68}. For example, an LQR-based motion and vibration synthesized controller was proposed in \cite{bib37} for a multilink FRM under uncertain disturbances with arbitrary frequencies and unknown magnitudes. Additionally, model-based sliding mode control (SMC) was developed in \cite{bib38}, \cite{bib58} to generate driving torques. \cite{bib13} proposed a modified tip-based control strategy for flexible telescoping manipulators by leveraging a discretized Euler-Bernoulli beam framework.
	Distinct from these strategies, boundary control directly interacts with the boundaries of the FRM, offering significant advantages in mitigating wave propagation and enhancing control stability, particularly suited for suppressing vibrations at the extremities. The aforementioned control designs rely on finite-dimensional ODE approximations of the flexible arm, which inevitably lead to spillover effects.
	
	To enhance control accuracy and avoid spillover effects, a growing body of work has focused on controller design based on infinite-dimensional PDE models, like Euler-Bernoulli beams \cite{bib65} or Timoshenko beams \cite{bib66}. Incorporating the result in \cite{bib63}, the work in \cite{bib61} modeled the flexible robot fish as an Euler-Bernoulli beam to derive the control law. Considering boundary output constraints,
	a barrier controller was proposed to suppress vibrations in a Euler-Bernoulli beam modeled flexible manipulator in \cite{bib40}, in which disturbance rejection is incorporated in boundary control design, and \cite{bib39} introduced adaptive fault-tolerant control to mitigate the effects of actuator failures.
	Besides, robust output tracking for a Timoshenko beam using observer-based error feedback was proposed in \cite{bib47}. The work in \cite{bib48} explored stabilizing a rotating Timoshenko beam through force applied at the free end and torque at the pivoted end, while \cite{bib62} demonstrated stabilization of a rotating beam system requiring only torque control at the pivoted end. Practical applications include the stabilization of a one-link flexible arm interacting in soft environments  \cite{bib17} and LQR control of flexible beam Quanser experiments \cite{bib9}.
	
	The above work on boundary control of flexible manipulators is equivalent to placing damping at the joint to dissipate the vibration energy of the overall system. In contrast, the backstepping boundary control method, a powerful tool for infinite-dimensional systems, enables the spatially distributed placement of damping along the flexible manipulator using only joint-level actuation, thereby achieving arbitrarily fast vibration suppression. It was first applied to beam equations in \cite{bib50}, \cite{bib1}. This method was later extended in \cite{bib56} to design an output feedback control strategy for a slender Timoshenko beam model, where actuation occurs at the beam's base and sensing at the tip. Some studies have focused on backstepping control of the shear beam and Euler-Bernoulli beam, as shown in \cite{bib52}, \cite{bib53}. The Riemann transformation provides an effective way to equivalently represent beam PDEs in a lower-order form that is more amenable to control design. By Riemann transformation, the shear beam can be rewritten as coupled first-order hyperbolic PDEs with nonlocal terms, whose boundary control is given in \cite{bib4}. Recently, the work in \cite{bib55} applied Riemann transformation  to reformulate the Timoshenko beam as a $(2+2) \times (2+2)$ first-order hyperbolic PDE system, and achieved rapid stabilization of a Timoshenko beam.
	
	In this paper, using Hamilton's principle, the model for an n-degree-of-freedom serial flexible manipulator robot (nDSFMR) is derived. The flexible links are modeled as slender Timoshenko beams coupled with ODEs for the rigid joints. We propose a novel output-feedback control strategy featuring a backstepping boundary controller and an observer constructed from joint angles and the base strain measurements. Finally, the effectiveness of the proposed approach is validated through trajectory tracking experiments conducted on a 2DSFMR.
	
	\subsection{Main Contribution}
	1) Unlike most existing control designs that rely on finite-dimensional ODE approximations and consequently suffer from spillover effects \cite{bib58}, \cite{bib37}, \cite{bib13}, this work establishes a physics-consistent infinite-dimensional model of a flexible manipulator. The control design is carried out directly at the PDE level, effectively eliminating spillover and enabling high-accuracy control of flexible robotic systems.
	
	2) Existing PDE-based approaches typically suppress vibrations by injecting damping at specific locations \cite{bib63}, \cite{bib61}. In contrast, the proposed backstepping-based control achieves an equivalent spatially distributed damping injected along the entire flexible structure using only boundary actuation. The damping profile can be tuned via control design parameters, leading to rapid vibration suppression and fast convergence of the end-effector to the desired trajectory.
	
	3) The approach is validated both theoretically and experimentally, enabling fast vibration suppression and end-effector tracking.

	subsection{Organization}
	The rest of the article is organized as follows. Section \ref{Modeling} presents the problem formulation. The proposed output-feedback control framework is detailed in Section \ref{Output}, followed by experimental validation on the 2DSFMR in Section \ref{Experiment}. Section \ref{Conclusion} provides the conclusion and future work.
	
	\subsection{Notation}
	\begin{itemize}
		\item  Let $U \subseteq \mathbb{R}^n$ be an open set and let $\Omega \subseteq \mathbb{R}$ be a set. By $C^0(U;\Omega)$, we denote the class of continuous mappings on $U$, which take values in $\Omega$. By $C^k(U;\Omega)$, where $k \geq 1$, we denote the class of continuous functions on $U$, which have continuous derivatives of order $k$ on $U$ and take values in $\Omega$. Also, by $L^{\infty}(U,\Omega)$, we denote the class of (Bochner) measurable mappings on $U$, whose $\Omega$-norms are essentially bounded.
		\item We use the notation $L^2(0,1)$ for the standard space of the equivalence class of square-integrable, measurable functions $f:[0,1]\rightarrow \mathbb{R}$, with $\| f \|^2:=\int_{0}^{1}f(x)^2dx<+\infty$ for $f\in L^2(0,1)$. For an integer $k \geq 1$, $H^k(0,1)$ denotes the Sobolev space of functions in $L^2(0,1)$ with all its weak derivatives up to order $k$ in $L^2(0,1)$, with $\| f\|^2_{H^k}=\int_{0}^{1}(f(x)^2+f^{(1)}(x)^2+\cdot\cdot\cdot+f^{(k)}(x)^2)dx$.
		\item The notation $|\cdot|$ denotes Euclidean norm. The notation $\dot{z}(t)$ denotes the time derivative of $z$. The notation $c^{(i)}(x)$ denote the i times derivatives of $c$.
		\item Suppose that $f(x)$, $F(x,y)$ are functions defined in domain $\mathcal{D}$ where $\mathcal{D}=\lbrace (x,y)\in \mathbb{R}^2 \vert 0 \leq y \leq x \leq 1 \rbrace$, then the norm $| \cdot |_{\infty}$ is defined by $| f(x) |_{\infty}=\sup\limits_{x\in\mathcal{D}}\lvert f(x) \lvert$ and the norm $\| \cdot\|_{\infty}$ is defined by $\| F(x,y)\|_{\infty}=\sup\limits_{x,y\in\mathcal{D}}|F(x,y)|$.
	\end{itemize}
	
	\begin{figure}
		\centerline{\includegraphics[width=\columnwidth]{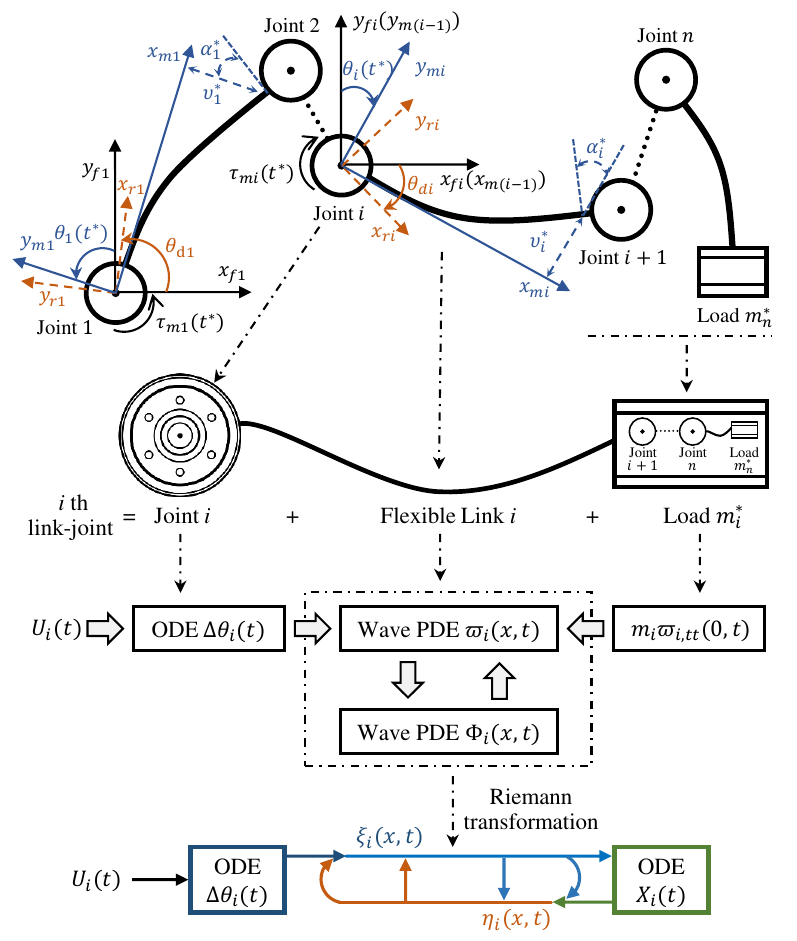}}
		\caption{n-degree-of-freedom serial flexible manipulator robot (nDSFMR) : from the physical model to the mathematical plant.}
		\label{Fig: multiple links}
	\end{figure}
	
	\section{Modeling of an nDSFMR}\label{Modeling}
	
	In this section, we derive the dynamic model of an n-degree-of-freedom serial flexible manipulator robot (nDSFMR) using Hamilton's principle.
	The nDSFMR is depicted in Fig. \ref{Fig: multiple links}, where $x_{fi}y_{fi}$ denotes the fixed coordinate frame of the $i$-th joint, $x_{mi}y_{mi}$ represents the moving frame rotating with the $i$-th joint. The $x_{ri}y_{ri}$ coordinate is a reference frame that rotates according to the reference trajectory of the $i$-th joint $\theta_{di}(t)$, which is obtained from the tip trajectory via a template-based FRM motion planner and satisfies
	\begin{align}
		\theta_{di}^{(j)}(t)\in L^\infty(0,\infty), i=1,...,n;j=1,2. \label{eq:conditons_theta_id}
	\end{align}
	
	This robot system is composed of serial link-joints where each flexible link is rigidly clamped to its joint, and also carries a harmonic joint at its distal end, to which the subsequent link is attached.
	From an applied perspective, the nDSFMR mechanism effectively captures the torsional compliance and serial linkage flexibility intrinsic to modern lightweight robotic manipulators. From a theoretical standpoint, the coupling between rigid-body motion and elastic deformation results in infinite-dimensional dynamics that necessitate active vibration suppression. Consequently, the nDSFMR platform serves as a vital benchmark for validating the control strategies designed for high-precision performance in the presence of spatio-temporal vibrations.
	
	\subsection{Modeling}
	
	The dynamic model of the nDSFMR is a serial concatenation of individual link subsystems, characterized by the moving frame of the $i$-th joint coinciding with the fixed frame of the $(i+1)$-th joint. For brevity, this section presents the model for the $i$-th link-joint of the nDSFMR Robot, derived via Hamilton's principle. The Lagrangian of the $i$-th link-joint $\acute L_i$, defined by kinetic energy-potential energy, is reached as follows
	
	\begin{align}
		\acute L_i=&KE_{i,trans}^\ast
		+KE_{i,rot}^\ast+\frac{1}{2}J_i^{\ast}\dot{\theta}_i(t^{\ast})^2-PE_{i,bending}^\ast \notag \\
		&-PE_{i,shear}^\ast \label{eq:L1}
	\end{align}
	where $KE_{i,trans}^\ast$, $KE_{i,rot}^\ast$, $\frac{1}{2}J_i^{\ast}\dot{\theta}_i(t^{\ast})^2$ are kinetic energy and $PE_{i,bending}^\ast$, $PE_{i,shear}^\ast$ are potential energy, both detailed in \cite{bib0}, and where $\theta_i(t^{\ast})$, $J_i^\ast$ are the angle of the moving reference $x_{mi}y_{mi}$ and the inertia moment of the $i$-th joint, respectively, with the virtual work
	
	\begin{align}
		\delta W_i=&(\tau_{mi}(t^\ast)+c_i^\ast \dot{\theta}_i(t^\ast))\delta(\theta_i(t^\ast))+m_i^\ast(\upsilon^\ast_{i,t^\ast t^\ast}(0,t^\ast)\notag \\
		&+(L_i^\ast+R_i^\ast)\ddot{\theta}_i(t^\ast))\delta(\upsilon_i^\ast(0,t^\ast)) \label{eq:virtual_work}
	\end{align}
	where $c_i^\ast$ is the damping coefficient of the $i$-th joint, $\tau_{mi}(t^\ast)$ is the torque of the $i$-th joint, $L_i^\ast$ is the length of the $i$-th flexible link, $\upsilon_i^\ast(x^\ast,t^\ast)$  is the transverse displacement of the $i$-th link in the moving frame $x_{mi}y_{mi}$, $R_i^\ast$ is the radius of the $i$-th joint and $m_i^\ast$ is the tip mass of the $i$-th link.
	
	Then, introducing the following dimensionless parameters
	\begin{align}
		&x=\frac{x^\ast}{L_i^\ast},\quad I_i=\frac{I_i^\ast}{{L_i^\ast}^4},\quad t=t^\ast \omega_0^\ast,\quad A_i=\frac{A_i^\ast}{{L_i^\ast}^2},\quad  \upsilon_i=\frac{\upsilon_i^\ast}{L_i^\ast}, \notag \\
		&R_i=\frac{R_i^\ast}{L_i^\ast},\quad G_i=\frac{G^\ast {L_i^\ast}^4}{E^\ast I_i^\ast},\quad \rho_i=\frac{\rho^\ast {L_i^\ast}^6 {\omega_0^\ast}^2}{E^\ast I_i^\ast},\quad J_i=J_i^\ast{\omega_0^\ast}^2 \notag \\
		&c_i=c_i^\ast \omega_0, \quad m_i=\frac{m_i^\ast L_i^\ast{\omega_0^\ast}^2}{E^\ast I_i^\ast} \label{parameters_dimensionless}
	\end{align}
	where $I_i^\ast$ is the area moment of inertia of the cross-section of the $i$-th link about the neutral axis $x_i^{\ast}$, $\omega_0^\ast$ is the natural frequency, $A_i^\ast$ is the cross sectional area of the $i$-th flexible link, $E^\ast$ is the modulus of elasticity, $G^\ast$ is the shear modulus, $\rho^\ast$ is the density of the beam, and then applying Hamilton's principle with the energy equations $\acute L_i$ \eqref{eq:L1} and virtual work $\delta W_i$ \eqref{eq:virtual_work}, the dynamic model of the $i$-th link-joint is derived as follows
	\begin{align}
		&\rho_i A_i((1+R_i-x)\ddot{\theta}_i(t)+\upsilon_{i,tt}(x,t)) \notag \\
		&+k'G_iA_i(\alpha_{i,x}(x,t)-\upsilon_{i,xx}(x,t))=0, \label{eq:upsilon} \\
		&\rho_i I_i(-\ddot{\theta}_i(t)+\alpha_{i,tt}(x,t))-\alpha_{i,xx}(x,t)\notag \\
		&+k'G_iA_i(\alpha_i(x,t)-\upsilon_{i,x}(x,t))=0,  \label{eq:alpha} \\
		&J_i\ddot{\theta}_i(t)=c_i\dot{\theta}_i(t)+\tau_{mi}(t) \label{eq:theta}
	\end{align}
	where $k'$ is the shear factor, $\alpha_i(x^\ast,t^\ast)$ is the angle of rotation of the cross-section of the $i$-th link due to the bending moment, with the clamped boundary conditions to be satisfied
	\begin{align}
		&\alpha_{i,x}(1,t)=0,\quad \alpha_{i,x}(0,t)=0, \quad \upsilon_{i}(1,t)=0, \label{eq:bc_alpha} \\
		& \upsilon_{i,x}(0,t)=m_i(\upsilon_{i,tt}(0,t)+(1+ R_i)\ddot{\theta}_i(t))+\alpha_i(0,t). \label{eq:bc_upsilon}
	\end{align}
	
	\subsection{Reformulation}
	
	To rewrite the system \eqref{eq:upsilon}–\eqref{eq:bc_upsilon} into a form suitable for control design, we introduce new variables to place the model \eqref{eq:upsilon}--\eqref{eq:bc_upsilon} in a reference frame $x_{ri}y_{ri}$ that aligns with the reference trajectory \eqref{eq:conditons_theta_id} we want to track
	\begin{align}
		\varpi_i(x,t)=&\upsilon_i(x,t)+(1+R_i-x)(\theta_i(t)-\theta_{di}(t)), \label{varpi} \\
		\Phi_i(x,t)=&\alpha_i(x,t)-(\theta_i(t)-\theta_{di}(t)), \label{Phi} \\
		\Delta \theta_i(t)=& \theta_i(t)-\theta_{di}(t), \label{det_theta1} \\
		\epsilon_i=&\frac{\rho_i}{k'G_i},\quad \mu_i=\rho_i I_i,\quad a_i=A_i\rho_i, \label{epsilon1}
	\end{align}
	and then the system \eqref{eq:upsilon}–\eqref{eq:bc_upsilon} is rewritten as
	
	\begin{align}
		\epsilon_i \varpi_{i,tt}(x,t)-\varpi_{i,xx}(x,t)=&-\epsilon_i(1+ R_i-x)\ddot{\theta}_{di}(t) \notag \\
		&-\Phi_{i,x}(x,t), \label{eq:varpi}\\
		\mu_i\Phi_{i,tt}(x,t)-\Phi_{i,xx}(x,t)=&-\frac{a_i}{\epsilon_i}(\Phi_i(x,t)-\varpi_{i,x}(x,t)) \notag \\
		&+\mu_i\ddot{\theta}_{di}(t), \label{eq:Phi}\\
		J_i\Delta\ddot{\theta}_i(t)=&c_i\Delta\theta_i(t)+U_i(t) \label{eq:det_theta1}
	\end{align}
	for $x \in (0,1)$, $t>0$,
	with boundary conditions
	\begin{align}
		\varpi_{i,t}(1,t)=&R_i\Delta\dot{\theta}_i(t), \quad \Phi_{i,x}(1,t)=0, \quad
		\Phi_{i,x}(0,t)=0, \label{eq:bc_varpi} \\
		\varpi_{i,x}(0,t)=&\Phi_i(0,t)+m_i(\varpi_{i,tt}(0,t)+(1+R_i)\ddot{\theta}_{di}(t)), \label{eq:bc_Phi}
	\end{align}
	where
	\begin{align}
		U_i(t)=\tau_{mi}(t)-J_i\ddot{\theta}_{di}(t)+c_i\theta_{di}(t).\label{eq:Ui1}
	\end{align}
	Please note that $U_i$ is to be designed in the following process, and then \eqref{eq:Ui1} will be used to obtain the actual control input $\tau_{mi}(t)$ for nDSFMR once $U_i(t)$ is built.
	
	Due to the flexible link’s thinness, we consider it a slender Timoshenko Beam, which implies that the parameter $\mu_i$ is very small. Following the approximation approach in \cite{bib1}, we set $\mu_i=0$ and $\Phi_i(0,t)=0$. Consequently, we get
	\begin{align}
		\frac{a_i}{\epsilon_i}(\Phi_i(x,t)-\varpi_{i,x}(x,t))-\Phi_{i,xx}(x,t)=0, \label{eq:Phi_mu0}
	\end{align}
	then using the Laplace transformation for \eqref{eq:Phi_mu0}, the solution of $\Phi_i$ is derived as
	\begin{align}
		\Phi_i(x,t)=-b_i\int_{0}^{x}\sinh(b_i(x-y))\varpi_{i,y}(y,t)dy \label{solution:Phi}
	\end{align}
	where $b_i=\sqrt{\frac{a_i}{\epsilon_i}}$. Differentiating $\Phi_i(x,t)$ with respect $x$ and substituting the result into \eqref{eq:varpi}, we obtain
	\begin{align}
		\epsilon_i\varpi_{i,tt}(x,t)=&\varpi_{i,xx}(x,t)-\epsilon_i(1+R_i-x)\ddot{\theta}_{di}(t) \notag \\
		&+b_i^2\int_{0}^{x}\cosh(b_i(x-y))\varpi_{i,y}(y,t)dy \label{eq:varpi_only}
	\end{align}
	with boundary conditions to be satisfied
	\begin{align}
		\varpi_{i,t}(1,t)=&R_i\Delta\dot{\theta}_i(t), \\ \varpi_{i,x}(0,t)=&m_i(\varpi_{i,tt}(0,t)+(1+R_i)\ddot{\theta}_{di}(t)). \label{eq:bc_varpi_only}
	\end{align}
	
	To facilitate the backstepping control design \cite{bib46}, \cite{bib2}, we use the following Riemann transformation
	\begin{align}
		\xi_i(x,t)&=\sqrt{\epsilon_i}\varpi_{i,t}(x,t)+\varpi_{i,x}(x,t), \label{xi1} \\
		\eta_i(x,t)&=\sqrt{\epsilon_i}\varpi_{i,t}(x,t)-\varpi_{i,x}(x,t), \label{eta1} \\
		\acute x_i(t)&=\varpi_i(0,t) \label{acute x1}
	\end{align}
	to convert the system \eqref{eq:varpi_only}--\eqref{eq:bc_varpi_only}  into a $2\times2$ coupled hyperbolic PDE-ODE system
	\begin{align}
		\sqrt{\epsilon_i}\xi_{i,t}(x,t)=&\xi_{i,x}(x,t)+\frac{b_i^2}{2}\int_{0}^{x}\cosh(b_i(x-y))(\xi_i(y,t) \notag \\
		&-\eta_i(y,t))dy-\epsilon_i(1+R_i-x)\ddot{\theta}_{di}(t), \label{eq:xi1} \\
		\sqrt{\epsilon_i}\eta_{i,t}(x,t)=&-\eta_{i,x}(x,t)-\epsilon_i(1+R_i-x)\ddot{\theta}_{di}(t)+\frac{b_i^2}{2} \notag \\
		&\int_{0}^{x}\cosh(b_i(x-y))(\xi_i(y,t)-\eta_i(y,t))dy,  \label{eq:eta1} \\
		\xi_i(1,t)=&-\eta_i(1,t)+2\sqrt{\epsilon_i}R_i\Delta\dot{\theta}_i(t),\label{eq:bc_xi1} \\
		\eta_i(0,t)=&-\xi_i(0,t)+C_iX_i(t), \label{eq:bc_eta1} \\
		\dot{X}_i(t)=&A_i\dot{X}_i(t)+B_i\xi_i(0,t)+D_i\ddot{\theta}_{di}(t), \label{eq:X1}
	\end{align}
	where $X_i(t)=[\dot{\acute x}_i(t),$ $\acute x_i(t)]'$, $A_i=[-\frac{\sqrt{\epsilon_i}}{m_i},$ $0;$ $1,$ $0]$, $B_i=[\frac{1}{m_i},$ $0]'$, $C_i=[2\sqrt{\epsilon_i},$ $0]$, $D_i=[-(1+R_i),$ $0]'$.
	
	The plant's initial conditions are taken as
	\begin{align}
		&	(\Delta \theta_i(0),\Delta \dot{\theta}_i(0),\xi_i(x,0),\eta_i(x,0), X_i(0)) \notag \\
		&	\in \mathcal H:=\mathbb R\times \mathbb R\times H^{1}(0,1)^2  \times \mathbb R^2 . \label{space:H}
	\end{align}
	
	Now, we obtain the model of the $i$-th link-joint \eqref{eq:det_theta1} and \eqref{eq:xi1}--\eqref{eq:X1} that is ready for control design.
	
	\section{Output-feedback control design of an nDSFMR}\label{Output}
	In this section, we present the state-feedback control design and observer design for the $i$-th link-joint to achieve the output-feedback control, which constitute an output-feedback control framework for an n-degree-of-freedom serial flexible manipulator robot (nDSFMR) based on the theoretical result established in \cite{bib0}.
	\subsection{State-Feedback Control Design}
	
	We propose a backstepping transformation to map the original system \eqref{eq:xi1}--\eqref{eq:X1} into a target system with the desired performance. Therefore, we built the backstepping transformation as
	\begin{align}
		\beta_i(x,t)=&\xi_i(x,t)+\gamma(x)X_i(t)-\int_{0}^{x}k(x,y)\xi_i(y,t)dy\notag \\
		&-\int_{0}^{x}l(x,y)\eta_i(y,t)dy \label{Trans:beta1}
	\end{align}
	whose inverse is
	\begin{align}
		\xi_i(x,t)=&\beta_i(x,t)-\lambda(x)X_i(t)+\int_{0}^{x}\rho(x,y)\beta_i(y,t)dy\notag \\	&+\int_{0}^{x}\sigma(x,y)\eta_(y,t)dy. \label{Trans:xi1}
	\end{align}
	
	The kernels $k(x,y)$, $l(x,y)$ evolving in the domain $\mathcal{D}=\lbrace (x,y)\in \mathbb{R}^2 | 0 \leq y \leq x \leq 1 \rbrace$ and the kernel $\gamma(x)$ defined in $\lbrace 0\leq x\leq 1 \rbrace$ satisfy
	\begin{align}
		k_x(x,y)=&-k_y(x,y)-\frac{b_i^2}{2}\cosh(b_i(x-y))\notag \\
		&+\frac{b_i^2}{2}\int_{y}^{x}\cosh(b_i(z-y))(l(x,z)+k(x,z))dz, \label{eq:k} \\
		l_x(x,y)=&l_y(x,y)+\frac{b_i^2}{2}\cosh(b_i(x-y))\notag \\
		&-\frac{b_i^2}{2}\int_{y}^{x}\cosh(b_i(z-y))(l(x,z)+k(x,z))dz, \label{eq:l} \\
		\gamma_x(x)&=\sqrt{\epsilon_i}\gamma(x)A_i-l(x,0)C_i, \label{eq:gamma} \\
		k(x,0)=&-l(x,0)-\sqrt{\epsilon_i}\gamma(x)B_i, \quad l(x,x)=0, \label{eq:bc_k_l} \\
		\gamma(0)=&-K_i \label{eq:bc_gamma}
	\end{align}
	where $K_i\in \mathbb{R}^{1\times 2}$ is a control gain to be designed, $\gamma(x)=[\gamma_1(x), \gamma_2(x)]$. Notice, the expressions of inverse kernels $\lambda(x)$, $\rho(x,y)$, $\sigma(x,y)$ are similar to those of $\gamma(x)$, $k(x,y)$, $l(x,y)$, where $k\in C^1(\mathcal D)$, $l \in C^1(\mathcal D)$, $\gamma\in C^1(0,1)$, and the detailed analysis is shown in \cite{bib0}.
	
	Using the transformations \eqref{Trans:beta1}, \eqref{Trans:xi1}, the original \eqref{eq:xi1}--\eqref{eq:X1} is converted to the following equations
	\begin{align}
		\sqrt{\epsilon_i}\beta_{i,t}(x,t)&=\beta_{i,x}(x,t)+G_i(x)\ddot{\theta}_{di}(t), \label{eq:beta1} \\
		\sqrt{\epsilon_i}\eta_{i,t}(x,t)&=-\eta_{i,x}(x,t)-\epsilon_i(1+R_i-x)\ddot{\theta}_{di}(t)
		\notag \\
		&-\frac{b_i^2}{2}\int_{0}^{x}\cosh(b_i(x-y))(\eta_i(y,t)-\beta_i(y,t))dy  \notag \\
		&-\frac{b_i^2}{2}\int_{0}^{x}\cosh(b_i(x-y))\lambda(y)dyX_i(t) \notag \\
		&+\frac{b_i^2}{2}\int_{0}^{x}\int_{y}^{x}\cosh(b_i(x-z))(\sigma(z,y)\eta_i(y,t)\notag \\
		&+\rho(z,y)\beta_i(y,t))dzdy, \label{eq:eta1_target} \\
		\eta_i(0,t)&=-\beta_i(0,t)+(C_i-K_i)X_i(t), \label{eq:bc_eta1_target} \\
		\dot{X}_i(t)&=(A_i+B_iK_i)X_i(t)+B_i\beta_i(0,t)+D_i\ddot{\theta}_{di}(t) \label{eq:X1_target}
	\end{align}
	where $G_i(x)=\sqrt{\epsilon_i}\gamma(x)D_i-\int_{0}^{x}(k(x,y)-l(x,y))dy$.
	
	Additionally, according to \cite{bib5} and recalling \eqref{eq:theta}, \eqref{Trans:beta1} and \eqref{eq:bc_xi1}, through a lengthy calculation involving a change in the order of integration within a double integral, the dynamic right boundary $\beta_{i,t}(1,t)$, originating from the ODE in the input channel, of \eqref{eq:beta1}--\eqref{eq:X1_target} is obtained as
	\begin{align}
		\beta_{i,t}(1,t)&=-\eta_{i,t}(1,t)+h_1\beta_i(1,t)+h_2\beta_i(0,t)+h_3\eta_i(1,t) \notag \\
		&+h_4\eta_i(0,t)+h_5X_i(t)+\int_{0}^{1}H_6(y)\beta_i(y,t)dy \notag \\
		&+\int_{0}^{1}H_7(y)\eta_i(y,t)dy+2\sqrt{\epsilon_i}R_iU_i(t) \label{bet}
	\end{align}
	where the gains $h_1$, $h_2$, $h_3$, $h_4$, $h_5$, $H_6$, $H_7$ are shown in Appendix\ref{h}.  Then choosing
	\begin{align}
		U_i(t)&=\frac{1}{2\sqrt{\epsilon_i}R_i}\bigg[ -(\acute{c}_i+h_1)\beta_i(1,t)+\eta_{i,t}(1,t)-h_2\beta_i(0,t) \notag \\
		&-h_3\eta_i(1,t)-h_4\eta_i(0,t)-\int_{0}^{1}H_6(y)\beta_i(y,t)dy \notag \\
		&-h_5X_i(t)-\int_{0}^{1}H_7(y)\eta_i(y,t)dy\bigg], \label{U_target}
	\end{align}
	we get the right (dynamic) boundary of the target system
	\begin{align}
		\beta_{i,t}(1,t)=-\acute{c}_i\beta_i(1,t)\label{eq:bc_beta1}
	\end{align}
	where $\acute{c}_i$ is a positive design parameter. Now we arrive at the target system \eqref{eq:beta1}--\eqref{eq:X1_target}, \eqref{eq:bc_beta1} via the proposed transformations and by the choice of the control input \eqref{U_target}.
	
	\begin{Rem}\label{Rem:state-feedback}
		Like \cite{bib5}, the right boundary of the hyperbolic PDEs and the input $\Delta \theta_i$-ODE are captured as a dynamic right boundary of the hyperbolic PDEs in the control design, and thus the target system \eqref{eq:beta1}--\eqref{eq:X1_target} is achieved with a dynamic boundary \eqref{eq:bc_beta1}, which encapsulates the dynamics of the input ODE.
	\end{Rem}
	
	Substituting \eqref{eq:eta1}, \eqref{Trans:beta1} and \eqref{eq:xi1} into \eqref{U_target}, we get the controller expressed by the original states
	\begin{align}
		&U_i(t)=n_1\xi_i(1,t)+n_2\eta_i(1,t)+n_3X_i(t)+n_4\xi_i(0,t) \notag \\
		&+n_5\eta_{i,x}(1,t)+\int_{0}^{1}N_6(y)\xi_i(y,t)dy+\int_{0}^{1}N_7(y)\eta_i(y,t)dy \label{U_origin}
	\end{align}
	where $n_1$, $n_2$, $n_3$, $n_4$, $n_5$, $N_6$, $N_7$ are given in Appendix\ref{n}.
	
	The properties of the state-feedback closed-loop system are presented as follows
	\begin{Thm}\label{Thm:state-feedback}
		If  initial value $(\Delta \theta_i(0),\Delta \dot{\theta}_i(0),\xi_i(x,0),$ $\eta_i(x,0), X_i(0)) \in \mathcal H$, the closed-loop system consists of the plant \eqref{eq:xi1}--\eqref{eq:X1} and the controller \eqref{U_origin} has the following properties:
		
		1) All states in the closed-loop system are uniformly ultimately bounded in the sense that there exist positive constants $\Upsilon_0,\acute{C}_0,\mathcal{C}_0$ given by \eqref{mathC0} such that
		\begin{align}
			\Omega_0(t)< \Upsilon_0\Omega_0(0)e^{-\acute{C}_0 t}+\mathcal{C}_0 \label{Omeg0}
		\end{align}
		where
		\begin{align}
			\Omega_0(t)=& | \Delta \theta_i(t)|^2+| \Delta \dot{\theta}_i(t)|^2+\| \xi_i(\cdot,t)\|^2_{H^1}+\| \eta_i(\cdot,t)\|^2_{H^1} \notag \\
			&+| X_i(t)|^2,
		\end{align}
		and $\acute{C}_0,\mathcal{C}_0$ can be adjusted by the design parameter $\acute c_i$, $K_i$ according to \eqref{eq:acute_lambda_1}, \eqref{eq:math_C1}.
		
		2) The control input $U_i(t)$ \eqref{U_origin} is   bounded.\end{Thm}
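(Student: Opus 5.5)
The plan is to prove the result on the target system \eqref{eq:beta1}--\eqref{eq:X1_target}, \eqref{eq:bc_beta1} with a Lyapunov argument, and then transfer it back to the original variables using the invertible backstepping transformations \eqref{Trans:beta1}, \eqref{Trans:xi1}.

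\textbf{Step 1: the $\beta_i(1,t)$ subsystem.} Equation \eqref{eq:bc_beta1} gives $\beta_i(1,t)=\beta_i(1,0)e^{-\acute c_i t}$ directly. This boundary value then serves as the input at $x=1$ of the transport equation \eqref{eq:beta1}. That equation moves leftward with speed $1/\sqrt{\epsilon_i}$ and carries the bounded forcing $G_i(x)\ddot\theta_{di}(t)$, which is bounded by \eqref{eq:conditons_theta_id}.

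\textbf{Step 2: the $X_i$ dynamics.} Choose $K_i$ so that $A_i+B_iK_i$ is Hurwitz. This is possible because $(A_i,B_i)$ is controllable. Let $P_i$ solve the Lyapunov equation $(A_i+B_iK_i)'P_i+P_i(A_i+B_iK_i)=-Q_i$.

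\textbf{Step 3: the Lyapunov functional.} Take
\begin{align*}
V(t)=\;&X_i'P_iX_i+r_1\int_0^1 e^{\delta x}\beta_i^2dx+r_2\int_0^1 e^{-\delta x}\eta_i^2dx\\
&+r_3\beta_i(1,t)^2+(\text{the same terms for }\beta_{i,x},\eta_{i,x}).
\end{align*}
The $x$-derivative terms are needed to obtain the $H^1$ norm. Differentiate $V$ along the target system and integrate by parts. The traces $\beta_i(0,t)$ and $\eta_i(0,t)$ enter as follows:
\begin{itemize}
\item The term $-\beta_i(0)^2$ from the weighted $\beta$-integral absorbs the $B_i\beta_i(0,t)$ term in $\dot X_i$.
\item $\eta_i(0,t)^2$ is bounded by $2\beta_i(0)^2+2|C_i-K_i|^2|X_i|^2$, using \eqref{eq:bc_eta1_target}.
\end{itemize}
The nonlocal Volterra terms in \eqref{eq:eta1_target} are handled with Cauchy–Schwarz, using bounds on $\cosh$, $\lambda$, $\sigma$, $\rho$; these kernels are continuous by the kernel well-posedness established in \cite{bib0}. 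Because $\eta$ appears only in a strictly lower-triangular Volterra form, a large enough $\delta$ together with an ordering $r_2\ll r_1$ makes these couplings dominated. The terms involving $\ddot\theta_{di}$ are treated with Young's inequality and contribute a constant. The result should be
\[
\dot V\le -\acute C_0 V+c\,\|\ddot\theta_{di}\|_\infty^2 .
\]
For the $H^1$ part, differentiate the target PDEs in $t$ and use the PDE to convert $x$-derivatives into $t$-derivatives. This requires $\dddot\theta_{di}$ only if $\ddot\theta_{di}$ is differentiated. To avoid that, I would bound the $x$-derivative terms through $\beta_{i,x}=\sqrt{\epsilon_i}\beta_{i,t}-G_i\ddot\theta_{di}$, so that only the boundedness in \eqref{eq:conditons_theta_id} is needed.

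\textbf{Step 4: uniform ultimate boundedness.} Gronwall's inequality gives $V(t)\le V(0)e^{-\acute C_0t}+\mathcal C_0$. Norm equivalence then transfers this to the original states:
\begin{itemize}
\item $\|\xi_i\|_{H^1}$ is controlled by $\|\beta_i\|_{H^1}+\|\eta_i\|_{H^1}+|X_i|$ through the bounded kernels in \eqref{Trans:beta1}, \eqref{Trans:xi1}, and conversely.
\item $\Delta\dot\theta_i$ is recovered from \eqref{eq:bc_xi1} as $(\xi_i(1)+\eta_i(1))/(2\sqrt{\epsilon_i}R_i)$, which is bounded by the $H^1$ norms via the Sobolev trace inequality.
\item $\Delta\theta_i$ follows from $\varpi_i(1,t)=R_i\Delta\theta_i+\text{const}$ and $\varpi_i(1)=\acute x_i+\int_0^1\varpi_{i,x}dx$, with $\varpi_{i,x}=(\xi_i-\eta_i)/2$. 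This requires including $\Delta\theta_i$ consistently, possibly with an extra state in $V$.
\end{itemize}
Together these give \eqref{Omeg0} with $\Upsilon_0$ equal to the product of the equivalence constants.

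\textbf{Step 5: boundedness of $U_i$.} In \eqref{U_origin}, every term is a trace or an integral of $H^1$ states, or $X_i$, all of which are bounded. The exception is $n_5\eta_{i,x}(1,t)$, whose pointwise bound needs more regularity than $H^1$. I would obtain it from the characteristic representation: by \eqref{eq:eta1}, $\eta_{i,x}(1,t)=-\sqrt{\epsilon_i}\eta_{i,t}(1,t)+\text{bounded terms}$, and along the $\eta$-characteristic it is determined by $\eta_{i,t}(0,\cdot)$ at an earlier time. That quantity is in turn given by $\beta_{i,t}(0)$ and $\dot X_i$, and $\beta_{i,t}(0)$ travels from $\beta_{i,t}(1)=-\acute c_i\beta_i(1)$, which is bounded.

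The main obstacle is Step 3: closing the $H^1$ estimate. The difficulty is the nonlocal $\eta$-equation's dependence on $X_i$ and on $\ddot\theta_{di}$ without differentiating the latter, together with the pointwise bound on $\eta_{i,x}(1,t)$ needed in Step 5. If a direct Lyapunov argument fails there, I would fall back on the explicit method of characteristics over finite time windows.
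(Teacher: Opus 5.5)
Your architecture is the same as the paper's. You build a Lyapunov functional on the target system \eqref{eq:beta1}--\eqref{eq:X1_target}, \eqref{eq:bc_beta1} with oppositely weighted $L^2$ terms in $\beta_i$ and $\eta_i$, a $\beta_i(1,t)^2$ term and $X_i^TPX_i$. You then add derivative terms for $H^1$ and transfer back through \eqref{Trans:beta1}, \eqref{Trans:xi1}. The $L^2$ level is essentially right.

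The gap is the $H^1$ step, which you leave open, and the route you sketch would not close under \eqref{eq:conditons_theta_id}. Differentiating \eqref{eq:beta1}, \eqref{eq:eta1_target} in $t$ necessarily produces $G_i(x)\dddot{\theta}_{di}$ and $\epsilon_i(1+R_i-x)\dddot{\theta}_{di}$, which are not assumed bounded. The identity $\beta_{i,x}=\sqrt{\epsilon_i}\beta_{i,t}-G_i\ddot{\theta}_{di}$ does not remove this, since $\beta_{i,t}$ still needs its own dynamical estimate.

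The missing idea, which is what the paper does, is to differentiate the target PDEs in $x$. This gives $\sqrt{\epsilon_i}\beta_{i,xt}=\beta_{i,xx}+G_i^{(1)}(x)\ddot{\theta}_{di}$. The $\eta_{i,x}$-equation then has sources involving only $\epsilon_i\ddot{\theta}_{di}$, $\beta_i$, $\eta_i$, $X_i$ and Volterra integrals. The boundary traces come from the PDEs evaluated at the boundary:
\begin{itemize}
\item \eqref{eq:beta1} with \eqref{eq:bc_beta1} gives $\beta_{i,x}(1,t)=-\sqrt{\epsilon_i}\acute{c}_i\beta_i(1,t)-G_i(1)\ddot{\theta}_{di}(t)$.
\item \eqref{eq:eta1_target} at $x=0$, where its integrals vanish, combined with the time derivative of \eqref{eq:bc_eta1_target}, with \eqref{eq:X1_target}, and with \eqref{eq:beta1} at $x=0$, gives
\begin{align*}
\eta_{i,x}(0,t)=&\,\beta_{i,x}(0,t)-\sqrt{\epsilon_i}(C_i-K_i)(A_i+B_iK_i)X_i(t)\\
&-\sqrt{\epsilon_i}(C_i-K_i)B_i\beta_i(0,t)\\
&+\big(G_i(0)-\epsilon_i(1+R_i)-\sqrt{\epsilon_i}(C_i-K_i)D_i\big)\ddot{\theta}_{di}(t).
\end{align*}
\end{itemize}
Hence $\eta_{i,x}(0,t)^2$ is absorbed by the $-\beta_{i,x}(0,t)^2$ term exactly as at the $L^2$ level. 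The $\beta_i(1,t)^2$ contribution from $\beta_{i,x}(1,t)^2$ is absorbed by $-\acute{c}_i\beta_i(1,t)^2$ once the weight on the $\beta_{i,x}$ term is small. Only $\ddot{\theta}_{di}\in L^\infty$ enters, which is why \eqref{eq:math_C1} contains $|G^{(1)}|_\infty^2$, $\epsilon_i|(C_i-K_i)D_i|^2$ and $\epsilon_i^2(1+R_i)^2$ but no third derivative.

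Also, in Step 4 no extra state is needed. \eqref{varpi} together with $\upsilon_i(1,t)=0$ from \eqref{eq:bc_alpha} gives $\varpi_i(1,t)=R_i\Delta\theta_i(t)$ exactly. This yields the paper's formula $\Delta\theta_i=\frac{1}{2R_i}\int_0^1(\xi_i-\eta_i)dx+\acute{x}_i/R_i$. A nonzero constant would in fact spoil \eqref{Omeg0}, since it is of size $\Omega_0(0)$ and does not decay.

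Your Step 5 point is correct and goes beyond the paper, whose proof of Property 2 only cites Cauchy--Schwarz and Property 1. An $H^1$ bound does not control the pointwise trace $\eta_{i,x}(1,t)$ multiplying $n_5$ in \eqref{U_origin}. The characteristic repair works, but carry it out on $\beta_{i,x}$ and $\eta_{i,x}$ rather than on $t$-differentiated PDEs, for the same reason as above. Equivalently, differentiate the explicit characteristic formulas after a change of variables.

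Be aware of what the repair yields. Using Property 1 and Sobolev embedding to bound $\beta_{i,x}(1,\cdot)$, $X_i$, $\beta_i(0,\cdot)$ and the in-domain sources, $\eta_{i,x}(1,t)$ is determined by these bounded quantities and $\ddot{\theta}_{di}$. This holds only for $t\geq 2\sqrt{\epsilon_i}$, after one leftward transit of $\beta_i$ and one rightward transit of $\eta_i$. For earlier times, $\eta_{i,x}(1,t)$ is a trace of $\eta_{i,x}(\cdot,0)$ or $\beta_{i,x}(\cdot,0)$, which for data in $\mathcal H$ are only in $L^2$. So $U_i$ is then merely square integrable in time. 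A uniform bound on all of $[0,\infty)$ needs more regular initial data, e.g.\ in $H^2$.
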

	\begin{proof}
		The proof of Theorem $\ref{Thm:state-feedback}$ is shown in Appendix\ref{Proof:Theorem1}.
	\end{proof}
	\subsection{Observer Design}
	Since the distributed states required by controller \eqref{U_origin} are often inaccessible in practice, in this section, we design an observer for the unmeasured states only using the available measurements at PDE boundaries.
	
	Relying on the measurements $\Delta \theta_i(t)$, $C_iX_i(t)$, $\xi_i(0,t)$, like \cite{bib8}, the observer is built as a copy of the plant \eqref{eq:theta}, \eqref{eq:xi1}--\eqref{eq:X1} with output error injections
	\begin{align}
		\sqrt{\epsilon_i}\hat{\xi}_{i,t}(x,t)=&\hat{\xi}_{i,x}(x,t)+\frac{b_i^2}{2}\int_{0}^{x}\cosh(b_i(x-y))(\hat{\xi}_i(y,t) \notag \\
		&-\hat{\eta}_i(y,t))dy-\epsilon_i(1+R_i-x)\ddot{\theta}_{di}(t) \notag \\
		&-\Gamma^\xi(x)(\hat{\xi}_i(0,t)-\xi_i(0,t)), \label{eq:observer_xi1} \\
		\sqrt{\epsilon_i}\hat{\eta}_{i,t}(x,t)=&-\hat{\eta}_{i,x}(x,t)-\epsilon_i(1+R_i-x)\ddot{\theta}_{di}(t)+\frac{b_i^2}{2} \notag \\
		&\int_{0}^{x}\cosh(b_i(x-y))(\hat{\xi}_i(y,t)-\hat{\eta}_i(y,t))dy \notag \\
		&-\Gamma^\eta(x)(\hat{\xi}_i(0,t)-\xi_i(0,t)),  \label{eq:observer_eta1} \\
		\hat{\xi}_i(1,t)=&-\hat{\eta}_i(1,t)+2\sqrt{\epsilon_i}R_i\Delta\dot{\theta}_i(t),\label{eq:observer_bc_xi1} \\
		\hat{\eta}_i(0,t)=&-\xi_i(0,t)+C_iX_i(t), \label{eq:observer_bc_eta1} \\
		\dot{\hat{X}}_i(t)=&A_i\dot{\hat{X}}_i(t)+B_i\xi_i(0,t)+D_i\ddot{\theta}_{di}(t) \notag \\
		&-L_xC_i(\hat{X}_i(t)-X_i(t)), \label{eq:observer_X1}
	\end{align}
	where the observer gain $L_x$
	is chosen such that $A_i-L_xC_i$is Hurwitz,
	and where observer gains $\Gamma^\xi(x)$, $\Gamma^\eta(x)$ \eqref{eq:Gamma_eta_xi} are  to be determined later.
	\begin{Rem} \label{Rem:observer}
		All the measurements used by the observer are available. The signal $\xi_i(0,t)$ can be obtained via $\upsilon_{i,t}(0,t)$, $\upsilon_{i,x}(0,t)$, $\theta_i(t)$, $\dot{\theta}_i(t)$, according to \eqref{varpi}--\eqref{det_theta1}, \eqref{xi1}--\eqref{acute x1}. Also, recalling \eqref{varpi}, \eqref{det_theta1} and \eqref{acute x1}, $C_iX_i(t)$, $\Delta \dot{\theta}_i(t)$ can be derived via $\upsilon_i(0,t)$, $\theta_i(t)$, $\dot{\theta}_i(t)$.
		In the experiment of the $i$-th link-joint, $\theta_i(t)$ is measured by an encoder associated with the $i$-th joint. According to \cite{bib9}, recalling \eqref{parameters_dimensionless}, the estimation of $\upsilon_i(0,t)$ is obtained via $\upsilon_i(0,t)=\frac{2E_{bi}(t)}{3L_{wi}^\ast}$ where $E_{bi}(t)$ is the base strain measured by a strain gauge offset potentiometer, which is placed at the proximal end, and where $L_{wi}^\ast$ is the flexible link thickness.
		Similarly, $\upsilon_{i,x}(0,t)$ is expressed as $\upsilon_{i,x}(0,t)=\frac{4E_{bi}(t)}{3L_w^\ast}$. The time derivatives of $\theta_i(t)$, $\upsilon_{i}(0,t)$, i.e., $\dot{\theta}_i(t)$, $\upsilon_{i,t}(0,t)$ are derived via using different second-order low-pass filters expressed as $\mathcal G(s)=\frac{\omega_n^2}{s^2+2\zeta\omega_n+\omega_n^2}$, where $\omega_n$ is the signal filter cutting frequency, $\zeta$ is the filter damping ratio. Therefore, in the establishment of the observer for nDSFMR, all sensing signals we need are the angle of the $i$-th joint $\theta_i(t)$ and the base strain $E_{bi}$ of the $i$-th link, $i=1,2,\cdots,n$.
	\end{Rem}
	
	Define the observer error as
	\begin{align}
		[\widetilde{X}_i(t),\widetilde{\xi}_i(x,t),\widetilde{\eta}_i(x,t)]&=[\hat{X}_i(t),\hat{\xi}_i(x,t),\hat{\eta}_i(x,t)] \notag \\
		&-[X_i(t),\xi_i(x,t),\eta_i(x,t)]. \label{parameters_observer_error}
	\end{align}
	
	According to \eqref{eq:observer_xi1}--\eqref{eq:observer_X1}, the observer error dynamics can be obtained as
	\begin{align}
		\sqrt{\epsilon_i}\widetilde{\xi}_{i,t}(x,t)=&\widetilde{\xi}_{i,x}(x,t)+\frac{b_i^2}{2}\int_{0}^{x}\cosh(b_i(x-y))(\widetilde{\xi}_i(y,t) \notag \\
		&-\widetilde{\eta}_i(y,t))dy-\Gamma^\xi(x)\widetilde{\xi}_i(0,t), \label{eq:xi1_observer_error} \\
		\sqrt{\epsilon_i}\widetilde{\eta}_{i,t}(x,t)=&-\widetilde{\eta}_{i,x}(x,t)+\frac{b_i^2}{2}\int_{0}^{x}\cosh(b_i(x-y))(\widetilde{\xi}_i(y,t) \notag \\
		&-\widetilde{\eta}_i(y,t))dy-\Gamma^\eta(x)\widetilde{\xi}_i(0,t),  \label{eq:eta1_observer_error} \\
		\widetilde{\xi}_i(1,t)=&-\widetilde{\eta}_i(1,t), \quad
		\widetilde{\eta}_i(0,t)=0, \label{eq:bc_xi1_eta1_observer_error} \\
		\dot{\widetilde{X}}_i(t)=&(A_i-L_xC_i)\dot{\widetilde{X}}_i(t). \label{eq:X1_observer_error}
	\end{align}
	
	To eliminate the in-domain coupling between PDEs in \eqref{eq:xi1_observer_error}, \eqref{eq:eta1_observer_error}, we introduce the invertible backsteppting transformations:
	\begin{align}
		\widetilde{\xi}_i(x,t)=&\widetilde{\beta}_i(x,t)+\int_{0}^{x}\psi(x,y)\widetilde{\beta}_i(y,t)dy, \label{Trans:xi1_observer} \\
		\widetilde{\eta}_i(x,t)=&\widetilde{\alpha}_i(x,t)+\int_{0}^{x}\phi(x,y)\widetilde{\beta}_i(y,t)dy \label{Trans:eta1_obsever}
	\end{align}
	where the kernel functions $\psi(x,y),\phi(x,y)$ on $\acute{\mathcal{D}}=\{0 \leq x \leq y \leq 1 \}$ satisfy
	\begin{align}
		&-\phi_x(x,y)+\phi_y(x,y)+\frac{b_i^2}{2}\cosh(b_i(x-y)) \notag \\
		&-\frac{b_i^2}{2}\int_{x}^{y}\cosh(b_i(x-z))(\phi(z,y)-\psi(z,y))dz=0, \label{eq:phi} \\
		&\psi_x(x,y)+\psi_y(x,y)+\frac{b_i^2}{2}\cosh(b_i(x-y)) \notag \\
		&-\frac{b_i^2}{2}\int_{x}^{y}\cosh(b_i(x-z))(\phi(z,y)-\psi(z,y))dz=0, \label{eq:psi} \\
		&\phi(x,x)=0,\quad \psi(1,y)=-\phi(1,y). \label{eq:bc_phi_psi}
	\end{align}
	
	Please note that the kernels satisfy $\phi\in C^1(\mathcal {\acute D})$, $\psi \in C^1(\mathcal {\acute D})$, whose detailed analysis is given in \cite{bib0}.
	Applying the transformations \eqref{Trans:xi1_observer}, \eqref{Trans:eta1_obsever}, choosing the observer gain $\Gamma^\eta(x)$ and $\Gamma^\xi(x)$ as
	\begin{align}
		\Gamma^\eta(x)=\phi(x,0), \quad
		\Gamma^\xi(x)=\psi(x,0), \label{eq:Gamma_eta_xi}
	\end{align}
	we convert \eqref{eq:xi1_observer_error}--\eqref{eq:bc_xi1_eta1_observer_error} to the target observer error system
	\begin{align}
		\sqrt{\epsilon_i}\widetilde{\alpha}_{i,t}(x,t)=&-\widetilde{\alpha}_{i,x}(x,t)-\int_{0}^{x}S_{11}(x,y)\widetilde{\alpha}_i(y,t)dy, \label{eq:alpha_observer}  \\
		\sqrt{\epsilon_i}\widetilde{\beta}_{i,t}(x,t)=&\widetilde{\beta}_{i,x}(x,t)-\int_{0}^{x}S_{21}(x,y)\widetilde{\alpha}_i(y,t)dy, \label{eq:beta_observer} \\
		\widetilde{\beta}_i(1,t)=&-\widetilde{\alpha}_i(1,t), \quad \widetilde{\alpha}_i(0,t)=0 \label{eq:bc_beta_alpha_observer}
	\end{align}
	where
	$S_{11}(x,y)=\frac{b_i^2}{2}\cosh(b_i(x-y))+\int_{y}^{x}S_{21}(z,y)\phi(x,z)dz,$
	$S_{21}(x,y)=\frac{b_i^2}{2}\cosh(b_i(x-y))+\int_{y}^{x}S_{21}(z,y)\psi(x,z)dz.$
	
	The properties of the observer error system are presented as follows.
	\begin{Thm}\label{Thm:observer}
		Considering the observer system \eqref{eq:xi1_observer_error}--\eqref{eq:X1_observer_error} with observer gains \eqref{eq:Gamma_eta_xi} and initial values $\hat{\xi}_i(x,0),\hat{\eta}_i(x,0)$, $\hat{X}_i(0) \in \mathcal H$, the observer error system \eqref{eq:alpha_observer}--\eqref{eq:bc_beta_alpha_observer}  are exponentially stable in the sense that there exist positive constants $\Upsilon_e,\acute{C}_e$ such that
		\begin{align}
			\Omega_e(t)\leq \Upsilon_e\Omega_e(0)e^{-\acute{C}_et} \label{Ome}
		\end{align}
		where $\Omega_e(t)=\| \widetilde{\eta}_i(\cdot,t)\|^2_{H^1}+\| \widetilde{\xi}_i(\cdot,t)\|^2_{H^1}+| \widetilde{X}_i(t)|^2$ and $\acute{C}_e$ can be adjusted by the design parameter $L_x$.
	\end{Thm}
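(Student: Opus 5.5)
The proof works in the target coordinates and then transfers back. The $\widetilde X_i$-subsystem \eqref{eq:X1_observer_error} is decoupled from the PDE errors. Reading it as $\dot{\widetilde X}_i=(A_i-L_xC_i)\widetilde X_i$, it is exponentially stable at a rate fixed by the spectrum of the Hurwitz matrix $A_i-L_xC_i$. That rate can be placed by the choice of $L_x$, which is why $\acute C_e$ is tunable through $L_x$. So it suffices to show exponential stability of the PDE part $(\widetilde\xi_i,\widetilde\eta_i)$ in $H^1$. Because the kernels $\psi,\phi\in C^1(\acute{\mathcal D})$, the transformations \eqref{Trans:xi1_observer}--\eqref{Trans:eta1_obsever} are bounded and boundedly invertible on $L^2$ and on $H^1$; the inverse is a Volterra transformation with continuous kernels. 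Hence it is enough to prove exponential decay of $\|\widetilde\alpha_i\|_{H^1}^2+\|\widetilde\beta_i\|_{H^1}^2$ for the target system \eqref{eq:alpha_observer}--\eqref{eq:bc_beta_alpha_observer}, and then convert with constants depending only on the sup norms of $\psi,\phi,\psi_x,\phi_x$.

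The target system is a cascade. The $\widetilde\alpha_i$-equation is autonomous: it is a leftward-indexed transport with zero inflow $\widetilde\alpha_i(0,t)=0$ and a strictly lower-triangular (Volterra) nonlocal term. The $\widetilde\beta_i$-equation is driven by $\widetilde\alpha_i$ both in the domain and at the inflow $\widetilde\beta_i(1,t)=-\widetilde\alpha_i(1,t)$. Two arguments are available. The first is the finite-time one: since $\widetilde\alpha_i(0,t)=0$, one can show $\widetilde\alpha_i\equiv0$ after $t=\sqrt{\epsilon_i}$, and then $\widetilde\beta_i$ is a pure transport with zero inflow, so it vanishes after $t=2\sqrt{\epsilon_i}$. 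That gives exponential bounds trivially. To get explicit constants, however, I would use the Lyapunov functional
\begin{align*}
V(t)=\int_0^1 e^{-\delta x}\widetilde\alpha_i^2dx+a\int_0^1 e^{\delta x}\widetilde\beta_i^2dx .
\end{align*}
Differentiating and integrating by parts, the boundary terms are
\begin{align*}
-\tfrac{1}{\sqrt{\epsilon_i}}\big(e^{-\delta}\widetilde\alpha_i(1)^2\big)
\end{align*}
and
\begin{align*}
\tfrac{a}{\sqrt{\epsilon_i}}\big(e^{\delta}\widetilde\beta_i(1)^2-\widetilde\beta_i(0)^2\big)
=\tfrac{a}{\sqrt{\epsilon_i}}\big(e^{\delta}\widetilde\alpha_i(1)^2-\widetilde\beta_i(0)^2\big).
\end{align*}
The weights contribute $-\tfrac{\delta}{\sqrt{\epsilon_i}}V$. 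The nonlocal terms are bounded by Cauchy--Schwarz and Young with $\|S_{11}\|_\infty$ and $\|S_{21}\|_\infty$. Choosing $\delta$ large and then $a$ small so that $ae^{2\delta}<1$ gives $\dot V\le -cV$.

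For the $H^1$ part, I would apply the same functional to $(\widetilde\alpha_{i,t},\widetilde\beta_{i,t})$, which satisfy the same system: the kernels are time-invariant and the boundary conditions are homogeneous and linear, so they commute with $\partial_t$. Spatial derivatives are then recovered from the PDEs, since $\widetilde\alpha_{i,x}=-\sqrt{\epsilon_i}\widetilde\alpha_{i,t}-\int_0^x S_{11}\widetilde\alpha_i$, and similarly for $\widetilde\beta_{i,x}$. This yields the equivalence of $V+V_t$ with the $H^1$ norm. The initial value of $V_t$ is controlled by the $H^1$ norm of the initial data, again via the PDE, which is why initial conditions in $\mathcal H$ are needed.

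The main obstacle is rigor rather than estimation. First, the well-posedness and regularity of the error system in $H^1$ must be established so that differentiating in $t$ is justified; I would cite the semigroup setting used in \cite{bib0}. Second, the kernel bounds, including those on the resolvent-type kernels $S_{11}$ and $S_{21}$ defined through $\phi$ and $\psi$, must be justified; these follow from successive approximations, again as in \cite{bib0}. Combining the pieces gives \eqref{Ome} with $\Upsilon_e$ depending on the kernel norms and $\acute C_e=\min\{c,\text{decay rate of }A_i-L_xC_i\}$.
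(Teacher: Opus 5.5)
Your proposal is correct relative to the paper's setup and follows essentially the paper's route. The paper just reruns the proof of Theorem~\ref{Thm:state-feedback}: an exponentially weighted $L^2$ Lyapunov functional on the cascade target system, an $H^1$ upgrade, and transfer back through the invertible Volterra transformations. Your deviations are immaterial: you differentiate in $t$ instead of $x$ for the $H^1$ step, and you treat the decoupled $\widetilde X_i$ block separately rather than adding a term $\widetilde X_i^TP\widetilde X_i$.

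One caveat applies to the paper as much as to you, and it concerns the Hurwitz choice of $L_x$ that you invoke. With $C_i=[2\sqrt{\epsilon_i},\,0]$, the pair $(A_i,C_i)$ has an unobservable mode at eigenvalue $0$: for every $L_x=[\ell_1,\ell_2]'$ one gets $\det(sI-A_i+L_xC_i)=s\,(s+\sqrt{\epsilon_i}/m_i+2\sqrt{\epsilon_i}\,\ell_1)$. So no $L_x$ makes $A_i-L_xC_i$ Hurwitz, and the claimed decay of $|\widetilde X_i|^2$, with its rate placed by $L_x$, comes from the paper's design assumption rather than being proved.
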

	
	\begin{proof}
		It is straightforward to get Theorem \ref{Thm:observer} through the same steps in the proof of Theorem \ref{Thm:state-feedback}. Please see more details in \cite{bib0}.
	\end{proof}
	
	\subsection{Output-Feedback Closed-Loop System}\label{outputfd}
	
	Substituting all the unmeasurable states in the state feedback control law \eqref{U_origin} with the observer states, the output-feedback controller can be obtained as
	
	\begin{align}
		&U_{ofi}(t)=n_1\hat{\xi}_i(1,t)+n_2\hat{\eta}_i(1,t)+n_3\hat X_i(t)+n_4 \xi_i(0,t) \notag \\
		&+n_5\hat \eta_{i,x}(1,t)+\int_{0}^{1}N_6(y)\hat \xi_i(y,t)dy+\int_{0}^{1}N_7(y)\hat \eta_i(y,t)dy. \label{U_output_fedback}
	\end{align}
	
	The properties of the closed-loop system are given as follows:
	\begin{Thm}\label{Thm:output_feedback}
		For all initial data  $(\Delta \theta_i(0),\Delta \dot{\theta}_i(0),\xi_i(x,0),$ $\eta_i(x,0), X_i(0)) \in \mathcal H$
		and $(\hat{\xi}_i(x,0),\hat{\eta}_i(x,0), \hat{X}_i(0)) \in \mathcal H$, the output-feedback closed-loop system consisting of the plant \eqref{eq:det_theta1}, \eqref{eq:xi1}--\eqref{eq:X1}, the observer \eqref{eq:observer_xi1}--\eqref{eq:observer_X1}, and the controller \eqref{U_output_fedback},  has the following properties:
		
		1) There exists a positive constant $\Upsilon_{a}$ such that
		\begin{align}
			\Omega_{a}(t)\leq& \Upsilon_{a}{\Omega_{a}(0)}e^{-\acute{C}_{a}t}+\mathcal{C}_{a} \label{eq:Omega_a}
		\end{align}
		where
		\begin{align}
			&\Omega_{a}(t)=\| \eta_i(\cdot,t)\|^2_{H^1}+\| \hat{\eta}_i(\cdot,t)\|^2_{H^1}+\| \xi_i(\cdot,t)\|^2_{H^1}+\| \hat{\xi}_i(\cdot,t)\|^2_{H^1} \notag \\
			&+| \Delta \theta_i(t)|^2+| \Delta \dot{\theta}_i(t)|^2 +| X_i(t)|^2+| \hat{X}_i(t)|^2, \label{eq:Omega_a1}
		\end{align}
		and $\acute{C}_{a}=\min\{ \acute{C}_{0}, \acute{C}_{e} \}$ can be tuned via the parameters $\acute c_i$, $K_i$, $L_x$ as given in \eqref{eq:acute_lambda_1}, \eqref{eq:math_C1}, and where  $\mathcal{C}_{a}=2\mathcal{C}_0$ is positive, and $\mathcal C_0$ can be adjusted by $\acute c_i$ as given in \eqref{mathC0}.
		
		2) The output-feedback control law \eqref{U_output_fedback} is  bounded.
	\end{Thm}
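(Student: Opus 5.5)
The plan is a separation-principle argument in the target coordinates. I write the output-feedback closed loop as the state-feedback closed loop driven by the observer error. Since $U_{ofi}$ in \eqref{U_output_fedback} differs from $U_i$ in \eqref{U_origin} only through the estimated quantities, and $\xi_i(0,t)$ is measured, the difference is
\begin{align*}
U_{ofi}(t)-U_i(t)=&n_1\widetilde{\xi}_i(1,t)+n_2\widetilde{\eta}_i(1,t)+n_3\widetilde X_i(t)+n_5\widetilde\eta_{i,x}(1,t)\\
&+\int_0^1N_6(y)\widetilde\xi_i(y,t)dy+\int_0^1N_7(y)\widetilde\eta_i(y,t)dy .
\end{align*}
Applying the backstepping transformation \eqref{Trans:beta1} to the plant under $U_{ofi}$ gives the target system \eqref{eq:beta1}--\eqref{eq:X1_target}. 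The only change is that the dynamic boundary \eqref{eq:bc_beta1} becomes
\[
\beta_{i,t}(1,t)=-\acute c_i\beta_i(1,t)+2\sqrt{\epsilon_i}R_i\,(U_{ofi}(t)-U_i(t)).
\]
Together with the observer error target system \eqref{eq:alpha_observer}--\eqref{eq:bc_beta_alpha_observer} and \eqref{eq:X1_observer_error}, this forms a cascade. The autonomous observer error drives the perturbed state-feedback target system.

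I would build a Lyapunov functional $V=V_0+a_eV_e$. Here $V_0$ is the functional used for Theorem~\ref{Thm:state-feedback} in the appendix: weighted $L^2$ and $H^1$ norms of $\beta_i,\eta_i$, a quadratic form $X_i'PX_i$ with $P$ solving the Lyapunov equation for $A_i+B_iK_i$, and the term $\beta_i(1,t)^2$. $V_e$ is the functional behind Theorem~\ref{Thm:observer}, with exponentially weighted norms of $\widetilde\alpha_i,\widetilde\beta_i$ and their $x$-derivatives, plus $\widetilde X_i'P_e\widetilde X_i$.

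Differentiating $V_0$ reproduces the estimate from the proof of Theorem~\ref{Thm:state-feedback}, namely
\[
\dot V_0\le -\acute C_0V_0+c\,|\ddot\theta_{di}|^2,
\]
plus one cross term $\beta_i(1,t)(U_{ofi}-U_i)$. Young's inequality bounds this cross term by $\tfrac{\acute c_i}{2}\beta_i(1,t)^2+c\,|U_{ofi}-U_i|^2$.

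\textbf{Main obstacle.} The hard step is bounding $|U_{ofi}-U_i|^2$ by $V_e$. The point values $\widetilde\xi_i(1,t)$, $\widetilde\eta_i(1,t)$ are controlled by $H^1$ norms via Sobolev embedding. The trace term $\widetilde\eta_{i,x}(1,t)$, however, needs more. I would handle it in two moves:
\begin{itemize}
\item Express $\widetilde\eta_{i,x}(1,t)$ through the error PDE as $\sqrt{\epsilon_i}\widetilde\eta_{i,t}(1,t)$ plus bounded terms.
\item Include boundary terms such as $\widetilde\alpha_{i,x}(1,t)^2$ in $V_e$, which the transport structure makes dissipative with a negative boundary contribution in $\dot V_e$.
\end{itemize}
If this fails, I would instead place the $H^1$ estimate of $\widetilde\alpha_i$ in $V_e$ with weight $e^{\delta x}$. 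The boundary term $-\widetilde\alpha_{i,x}(1,t)^2$ in $\dot V_e$ then absorbs the trace. Since the kernels $\psi,\phi$ in \eqref{Trans:xi1_observer}, \eqref{Trans:eta1_obsever} are $C^1$, I can pass between $(\widetilde\alpha_i,\widetilde\beta_i)$ and $(\widetilde\xi_i,\widetilde\eta_i)$ in $H^1$. This gives $|U_{ofi}-U_i|^2\le c(V_e-\dot V_e/\text{const})$. Choosing $a_e$ large then yields
\[
\dot V\le-\min\{\acute C_0,\acute C_e\}V+c\,\|\ddot\theta_{di}\|_\infty^2,
\]
using \eqref{eq:conditons_theta_id}.

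To finish, I would apply the comparison lemma and then the norm equivalences: between $V$ and $\Omega_a$ through the invertible transformations \eqref{Trans:beta1}, \eqref{Trans:xi1}, and through $\hat\cdot=\widetilde\cdot+\cdot$. This gives \eqref{eq:Omega_a} with $\acute C_a=\min\{\acute C_0,\acute C_e\}$. The constant $\mathcal C_a=2\mathcal C_0$ comes from $|\hat\xi_i|^2\le2|\widetilde\xi_i|^2+2|\xi_i|^2$, with the decaying error terms folded into the exponential part. $\Delta\theta_i$ and $\Delta\dot\theta_i$ are recovered from the boundary relation \eqref{eq:bc_xi1} and from integrating, as in Theorem~\ref{Thm:state-feedback}.

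Boundedness of $U_{ofi}$ then follows from \eqref{U_output_fedback}. Each term is bounded by $\Omega_a$, including the trace term via the same dissipative boundary estimate, and $\Omega_a$ is uniformly bounded.
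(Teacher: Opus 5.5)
Your part 1 follows the route the paper points to. Its proof is deferred to the companion reference, with the appendix proof of Theorem~\ref{Thm:state-feedback} as template. The ingredients are the same as yours: the perturbed boundary $\beta_{i,t}(1,t)=-\acute c_i\beta_i(1,t)+2\sqrt{\epsilon_i}R_i(U_{ofi}-U_i)$, a cascade with the autonomous observer-error system \eqref{eq:alpha_observer}--\eqref{eq:bc_beta_alpha_observer}, and a composite functional. Your fallback treatment of $\widetilde\eta_{i,x}(1,t)$ is the right one, with two adjustments:
\begin{itemize}
\item Since $\widetilde\alpha_i$ travels to the right, use a decreasing weight $e^{-\delta x}$, not $e^{\delta x}$. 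The outflow term $-c\,\widetilde\alpha_{i,x}(1,t)^2$ appears in $\dot V_e$ with either weight, and $\widetilde\alpha_{i,x}(0,t)=0$ because $\widetilde\alpha_i(0,t)\equiv0$.
\item $U_{ofi}-U_i$ enters $\dot V_0$ not only through $\beta_i(1,t)(U_{ofi}-U_i)$. It also enters through the inflow term $\beta_{i,x}(1,t)^2$ of the $H^1$ part, since $\beta_{i,x}(1,t)=\sqrt{\epsilon_i}\beta_{i,t}(1,t)-G_i(1)\ddot\theta_{di}(t)$. Both contributions are absorbed by taking $a_e$ large.
\end{itemize}
Drop your first bullet: a term $\widetilde\alpha_{i,x}(1,t)^2$ inside $V_e$ is not controlled by $\Omega_a(0)$. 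Note also that \eqref{eq:alpha_observer}--\eqref{eq:bc_beta_alpha_observer} is a zero-inflow transport cascade. Hence $\widetilde\xi_i,\widetilde\eta_i\equiv0$ for $t\ge2\sqrt{\epsilon_i}$, and the trace only matters on a finite window.

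The genuine gap is in part 2. The term $n_5\hat\eta_{i,x}(1,t)$ in \eqref{U_output_fedback} is the trace of a spatial derivative, and your argument does not bound it pointwise in time. The outflow term in $\dot V_e$ only gives an $L^2$-in-time bound on $\widetilde\alpha_{i,x}(1,\cdot)$. The plant part $\eta_{i,x}(1,t)$ of $\hat\eta_{i,x}(1,t)$ is not controlled by $\|\eta_i(\cdot,t)\|_{H^1}$ at all.

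With data only in $\mathcal H$ the conclusion in fact fails. Take $\hat\xi_i(\cdot,0)=\xi_i(\cdot,0)$ and $\hat\eta_i(\cdot,0)=\eta_i(\cdot,0)$. Then the PDE error vanishes identically and $\hat\eta_{i,x}(1,t)=\eta_{i,x}(1,t)$. Differentiating \eqref{eq:eta1_target} in $x$ and integrating along characteristics gives $\eta_{i,x}(1,t)=\eta_{i,x}(1-t/\sqrt{\epsilon_i},0)+O(1)$ for $t<\sqrt{\epsilon_i}$, since every other term is bounded by $\Omega_a$ and $\ddot\theta_{di}$. So an $H^1$ profile with unbounded derivative, e.g.\ $\eta_{i,x}(x,0)\sim|x-\tfrac12|^{-1/3}$, makes $U_{ofi}$ unbounded, because $n_5=-\frac{1}{2\epsilon_iR_i}\neq0$.

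To close this step you have two options:
\begin{itemize}
\item Obtain pointwise control of first derivatives. This needs more regular, compatible data: either $W^{1,\infty}$ data propagated along characteristics, or $H^2$ data with an $H^2$-level Lyapunov estimate. The latter also needs a bound on $\dddot\theta_{di}$, which \eqref{eq:conditons_theta_id} does not provide.
\item Weaken the claim to $\sup_{t\ge0}\int_t^{t+1}|U_{ofi}(s)|^2ds<\infty$, which your dissipative boundary terms do deliver.
\end{itemize}
The paper's own justification (Cauchy--Schwarz plus the $H^1$ bound, as in part 2 of Theorem~\ref{Thm:state-feedback}) has the same defect, so you inherit it rather than introduce it. Still, your part 2 does not go through as written.

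Two smaller points:
\begin{itemize}
\item The $H^1$ estimates require zeroth-order compatibility of the observer data, namely $\hat\eta_i(0,0)=-\xi_i(0,0)+C_iX_i(0)$ and $\hat\xi_i(1,0)=-\hat\eta_i(1,0)+2\sqrt{\epsilon_i}R_i\Delta\dot\theta_i(0)$. Otherwise a jump propagates and $\Omega_a(t)=\infty$ for small $t>0$.
\item Your Young split and the inequality $\|\hat\xi_i\|^2\le2\|\widetilde\xi_i\|^2+2\|\xi_i\|^2$ do not reproduce the stated constants. You get a rate in general smaller than $\min\{\acute C_0,\acute C_e\}$, and a factor $3$ rather than $2$ on the plant's ultimate bound, since the plant terms appear both directly and inside the hatted ones. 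The paper's constants are equally loose.
\end{itemize}
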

	
	\begin{proof}
		The proof of Theorem \eqref{Thm:output_feedback} is stated in \cite{bib0}.
	\end{proof}
	
	\begin{Cor} \label{Cor:Thm3}
		Consider the practical system described by \eqref{eq:varpi}--\eqref{eq:bc_Phi} under the actual control input $\tau_{mi}$ obtained from \eqref{eq:Ui1}.
		The tracking error $\Delta\theta_i(t)$ \eqref{det_theta1}, vibration state $\varpi_i(x,t)$ \eqref{varpi}, $\Phi_i(x,t)$ \eqref{solution:Phi}, and their corresponding observer estimates $\hat{\varpi}_i(x,t)$, $\hat{\Phi}_i(x,t)$ are bounded in the sense that there exist positive constants $\Upsilon_d$, $\acute C_d$, $\mathcal C_d$ such that
		\begin{align}
			\Omega_d(t)\leq \Upsilon_de^{-\acute C_dt}\Omega_d(0)+\mathcal C_d \label{eq:Omega_d}
		\end{align}
		where
		\begin{align}
			&\Omega_d(t)=| \Delta \theta_i(t)|^2+| \Delta \dot{\theta}_i(t)|^2+\|\varpi_{i,t}(\cdot,t)\|^2+\|\varpi_{i,x}(\cdot,t)\|^2 \notag \\
			&+\|\Phi_{i,t}(\cdot,t)\|^2+\|\Phi_{i,x}(\cdot,t)\|^2+|\varpi_{i}(0,t)|^2+\|\hat{\varpi}_{i,t}(\cdot,t)\|^2 \notag \\
			&+\|\hat{\varpi}_{i,x}(\cdot,t)\|^2+\|\hat{\Phi}_{i,t}(\cdot,t)\|^2+\|\hat{\Phi}_{i,x}(\cdot,t)\|^2+|\hat{\varpi}_{i}(0,t)|^2, \label{eq:Omega_d1}
		\end{align}
		and where the ultimate bound $\mathcal C_d$ depends on the reference angle $\theta_{di}(t)$ \eqref{eq:conditons_theta_id}, while both $\mathcal C_d$ and decay rate $\acute C_d$ remain adjustable by the design parameters.
	\end{Cor}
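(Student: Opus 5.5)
The plan is to transfer the estimate \eqref{eq:Omega_a} of Theorem \ref{Thm:output_feedback} from the Riemann variables back to the physical variables. We show $\Omega_d(t)\le M_1\Omega_a(t)$ and $\Omega_a(0)\le M_2\Omega_d(0)$ for constants $M_1,M_2>0$ that depend only on $\epsilon_i$, $b_i$ and the observer and controller kernels. Then \eqref{eq:Omega_d} follows with $\Upsilon_d=M_1M_2\Upsilon_a$, $\acute C_d=\acute C_a$ and $\mathcal C_d=M_1\mathcal C_a=2M_1\mathcal C_0$. The dependence of $\mathcal C_d$ on $\theta_{di}$ is inherited from $\mathcal C_0$, which bounds the forcing terms through $\ddot\theta_{di}\in L^\infty$ by \eqref{eq:conditons_theta_id}. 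Tunability of $\acute C_d$ and $\mathcal C_d$ is inherited from \eqref{eq:acute_lambda_1}, \eqref{eq:math_C1}, \eqref{mathC0}.

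\textbf{Upper bound $\Omega_d\le M_1\Omega_a$.} Inverting \eqref{xi1}--\eqref{eta1} gives
\begin{align*}
\varpi_{i,t}=\tfrac{1}{2\sqrt{\epsilon_i}}(\xi_i+\eta_i),\qquad \varpi_{i,x}=\tfrac12(\xi_i-\eta_i).
\end{align*}
Hence $\|\varpi_{i,t}\|^2+\|\varpi_{i,x}\|^2\le c(\|\xi_i\|^2+\|\eta_i\|^2)$, and $|\varpi_i(0,t)|=|\acute x_i(t)|\le|X_i(t)|$. For $\Phi_i$ we use \eqref{solution:Phi}. Differentiating under the integral,
\begin{align*}
\Phi_{i,x}(x,t)=-b_i^2\int_0^x\cosh(b_i(x-y))\varpi_{i,y}(y,t)\,dy
\end{align*}
(the boundary term vanishes since $\sinh 0=0$), and similarly
\begin{align*}
\Phi_{i,t}(x,t)=-b_i\int_0^x\sinh(b_i(x-y))\varpi_{i,yt}(y,t)\,dy .
\end{align*}
Cauchy--Schwarz bounds $\|\Phi_{i,x}\|$ by $b_i^2\cosh b_i\,\|\varpi_{i,x}\|$.

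For $\Phi_{i,t}$ we integrate by parts in $y$ to move the derivative off $\varpi_{i,yt}$. This gives $\Phi_{i,t}(x,t)=-b_i^2\int_0^x\cosh(b_i(x-y))\varpi_{i,t}(y,t)dy+b_i\sinh(b_ix)\,\varpi_{i,t}(0,t)$. The point value $\varpi_{i,t}(0,t)=\dot{\acute x}_i(t)$ is a component of $X_i$, so $\|\Phi_{i,t}\|^2\le c(\|\xi_i\|^2+\|\eta_i\|^2+|X_i|^2)$. Alternatively, we can keep the form with $\varpi_{i,yt}$ and bound it by the $H^1$ norms of $\xi_i,\eta_i$, which are available in $\Omega_a$.

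The hatted quantities are handled identically. We define $\hat\varpi_i$ and $\hat\Phi_i$ through the same relations with $(\hat\xi_i,\hat\eta_i,\hat X_i)$ and apply the same bounds. The terms $|\Delta\theta_i|^2$ and $|\Delta\dot\theta_i|^2$ appear directly in $\Omega_a$.

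\textbf{Lower bound $\Omega_a(0)\le M_2\Omega_d(0)$.} This is the step I expect to be the main obstacle. $\Omega_a$ contains $H^1$ norms of $\xi_i,\eta_i$, which involve $\varpi_{i,tx}$ and $\varpi_{i,xx}$, and these are absent from $\Omega_d$. I would handle it in one of two ways.

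\emph{(a)} Interpret the initial term on the right of \eqref{eq:Omega_d} with $\Omega_d$ augmented by the spatial derivatives $\|\varpi_{i,tx}\|^2+\|\varpi_{i,xx}\|^2$ and their hatted counterparts. These are finite because the data lie in $\mathcal H$, and this is exactly the norm equivalence with the $H^1$ Riemann variables. The constant $\Upsilon_d$ then absorbs the ratio.

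\emph{(b)} Rerun the Lyapunov argument of Theorem \ref{Thm:state-feedback} and Theorem \ref{Thm:observer} at the $L^2$ level only. The $L^2$ part of the target-system estimates closes on its own, since the $H^1$ part was needed only to bound the control $U_i$.

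I would pursue (a), since it uses only earlier results. The required equivalence $\|\xi_i\|_{H^1}^2+\|\eta_i\|_{H^1}^2\asymp\|\varpi_{i,t}\|_{H^1}^2+\|\varpi_{i,x}\|_{H^1}^2$ is immediate from the linear relations above.
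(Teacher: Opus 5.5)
Your route is the paper's route. The paper's proof consists of asserting $\acute\theta_{1,1}\Omega_d(t)\le\Omega_a(t)\le\acute\theta_{1,2}\Omega_d(t)$ and substituting into \eqref{eq:Omega_a}. Your bound $\Omega_d\le M_1\Omega_a$ is correct and supplies the details the paper omits: the inverted Riemann variables, the formulas for $\Phi_{i,x}$ and $\Phi_{i,t}$, and $|\varpi_i(0,t)|\le|X_i(t)|$. For the hatted terms you need your $H^1$/trace alternative rather than ``identically''. Because the observer sets $\hat\eta_i(0,t)=-\xi_i(0,t)+C_iX_i(t)$, the boundary term $\hat\varpi_{i,t}(0,t)$ equals $\dot{\acute x}_i(t)+\widetilde\xi_i(0,t)/(2\sqrt{\epsilon_i})$, which is not a component of $\hat X_i$.

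\textbf{The gap.} You located it correctly: $\Omega_a$ contains $\|\xi_i\|_{H^1}^2$ and $\|\eta_i\|_{H^1}^2$, so the paper's inequality $\Omega_a\le\acute\theta_{1,2}\Omega_d$ is false, and the paper's proof relies on it silently. However, neither of your options proves the statement as written.

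\emph{Option (a)} proves \eqref{eq:Omega_d} with an $H^1$-level initial quantity on the right-hand side. That is a weaker statement, although it is exactly what the paper's argument actually supports.

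\emph{Option (b)} cannot be made to work. The differential inequality for $V_1$ does close, but $V_1(0)$ contains $\beta_i(1,0)^2$. By \eqref{Trans:beta1} and \eqref{eq:bc_xi1}, $\beta_i(1,0)$ contains the trace $-\eta_i(1,0)$, which $\Omega_d(0)$ does not control. In the output-feedback loop, the mismatch $U_i-U_{ofi}$ also contains $\widetilde\xi_i(1,t)$, $\widetilde\eta_i(1,t)$ and $n_5\widetilde\eta_{i,x}(1,t)$. So the $H^1$ level is not needed only to bound the input.

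\textbf{The literal statement fails.} Take:
\begin{itemize}
\item $\ddot\theta_{di}\equiv0$, zero estimation error, $\Delta\theta_i(0)=\Delta\dot\theta_i(0)=0$ and $X_i(0)=0$;
\item smooth compatible spikes $\eta_i(\cdot,0)\approx h$ and $\xi_i(\cdot,0)\approx-h$ supported near $[1-w,1]$.
\end{itemize}
Then $\Omega_d(0)=O(h^2w)$, while $\beta_i(1,0)=-h+O(hw)$. By \eqref{eq:beta1} and \eqref{eq:bc_beta1},
\[
\beta_i(x,\sqrt{\epsilon_i})=\beta_i(1,0)e^{-\acute c_i\sqrt{\epsilon_i}x}\quad\text{on }[0,1],
\]
so $\|\beta_i(\cdot,\sqrt{\epsilon_i})\|^2\gtrsim h^2$.

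This transfers to $\Omega_d$ by two facts:
\begin{itemize}
\item $\|\beta_i\|^2\lesssim\|\xi_i\|^2+\|\eta_i\|^2+|X_i|^2$;
\item $|\varpi_{i,t}(0,t)|\lesssim\|\Phi_{i,t}\|+\|\varpi_{i,t}\|$, obtained by taking the $L^2$ norm of your integration-by-parts identity for $\Phi_{i,t}$.
\end{itemize}
Together these give $\Omega_d(\sqrt{\epsilon_i})\gtrsim h^2$. Letting $w\to0$ and scaling $h$ (the system is linear) violates \eqref{eq:Omega_d} for any fixed $\Upsilon_d$ and $\mathcal C_d$.

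So drop (b). Adopt (a) explicitly as a corrected corollary, with $\Omega_a(0)$ on the right-hand side; your augmented $\Omega_d(0)$ is meant to reproduce this.
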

	\begin{proof}
		Recalling \eqref{xi1}--\eqref{acute x1}, \eqref{solution:Phi}, and Theorem \ref{Thm:output_feedback}, we get $\acute \theta_{1,1} \Omega_d(t) \leq  \Omega_a(t) \leq \acute \theta_{1,2} \Omega_d(t)$ for some positive $\acute \theta_{1,1}$, $\acute \theta_{1,2}$. Then, according to \eqref{eq:Omega_a} we obtain that
		$
		\Omega_d \leq \frac{\acute \theta_{1,2}\Upsilon_{a}}{\acute \theta_{1,1}}{\Omega_{d}(0)}e^{-\acute{C}_{a}t}+\frac{\mathcal{C}_{a}}{\acute \theta_{1,1}}.
		$
		
		Thus, \eqref{eq:Omega_d} is achieved with $\Upsilon_d=\frac{\acute \theta_{1,2}\Upsilon_{a}}{\acute \theta_{1,1}}$, $\acute C_d=\acute C_a$, and $\mathcal C_d=\frac{\mathcal{C}_{a}}{\acute \theta_{1,1}}$. The proof of Corollary \ref{Cor:Thm3} is complete.
	\end{proof}
	
	\begin{table}[htbp]
		\centering
		\caption{Parameters of the Quanser 2DSFL Robot}
		\label{tab:parameters}
		\begin{tabular}{ccc}
			\hline
			Description & Value & Unit \\
			\hline
			Young's modulus $E^\ast$    & $200$  & $Gpa$ \\
			Modulus of rigidity $G^\ast$ & $77.5$  & $Gpa$  \\
			Density $\rho^\ast$   & 7833 & $kg/m^3$  \\
			Cross-sectional area (Link 1) $A_1^\ast$ & $9.677\times10^{-5}$ & $m^2$ \\
			Cross-sectional area (Link 2) $A_2^\ast$ & $3.871\times10^{-5}$ & $m^2$ \\
			Area moment of inertia (Link 1) $I_1^\ast$ & $1.873\times10^{-7}$ &  $m^4$ \\
			Area moment of inertia (Link 2) $I_2^\ast$ & $1.638\times10^{-8}$ &  $m^4$ \\
			Shear coefficient $k'$ & $0.53066$ & $-$ \\
			First natural frequency $\omega_0^\ast$ & $1797.07$ & $-$ \\
			Length of Link 1 $L_1^\ast$ & $0.195$ & $m$ \\
			Length of Link 2 $L_2^\ast$ & $0.195$ & $m$ \\
			Radius of Disk 1 $R_1^\ast$ & $0.085$ & $m$ \\
			Radius of Disk 2 $R_2^\ast$ & $0.07$ & $m$ \\
			Inertia moment (Disk 1) $J_1^{\ast}$ & $0.01$ & $kg\cdot m^2$ \\
			Inertia moment (Disk 2) $J_2^{\ast}$ & $0.009$ & $kg\cdot m^2$ \\
			Tip mass (Link 1) $m_1^\ast$   & $1.57$ & $kg$  \\
			Tip mass (Link 2) $m_2^\ast$   & $0.157$ & $kg$  \\
			Damping coefficient (Motor 1) $c_1^\ast$ & $-4$ & $--$  \\
			Damping coefficient (Motor 2) $c_2^\ast$ & $-1.5$ & $--$  \\
			Torque constant (Motor 1) $K_{t1}$ & $8.925$ & $N\cdot m/A$  \\
			Torque constant (Motor 2) $K_{t2}$ & $0.87$ & $N\cdot m/A$  \\
			\hline
		\end{tabular}
	\end{table}
	
	\begin{figure*}[htbp] 
		\centering
		\includegraphics[width=\textwidth]{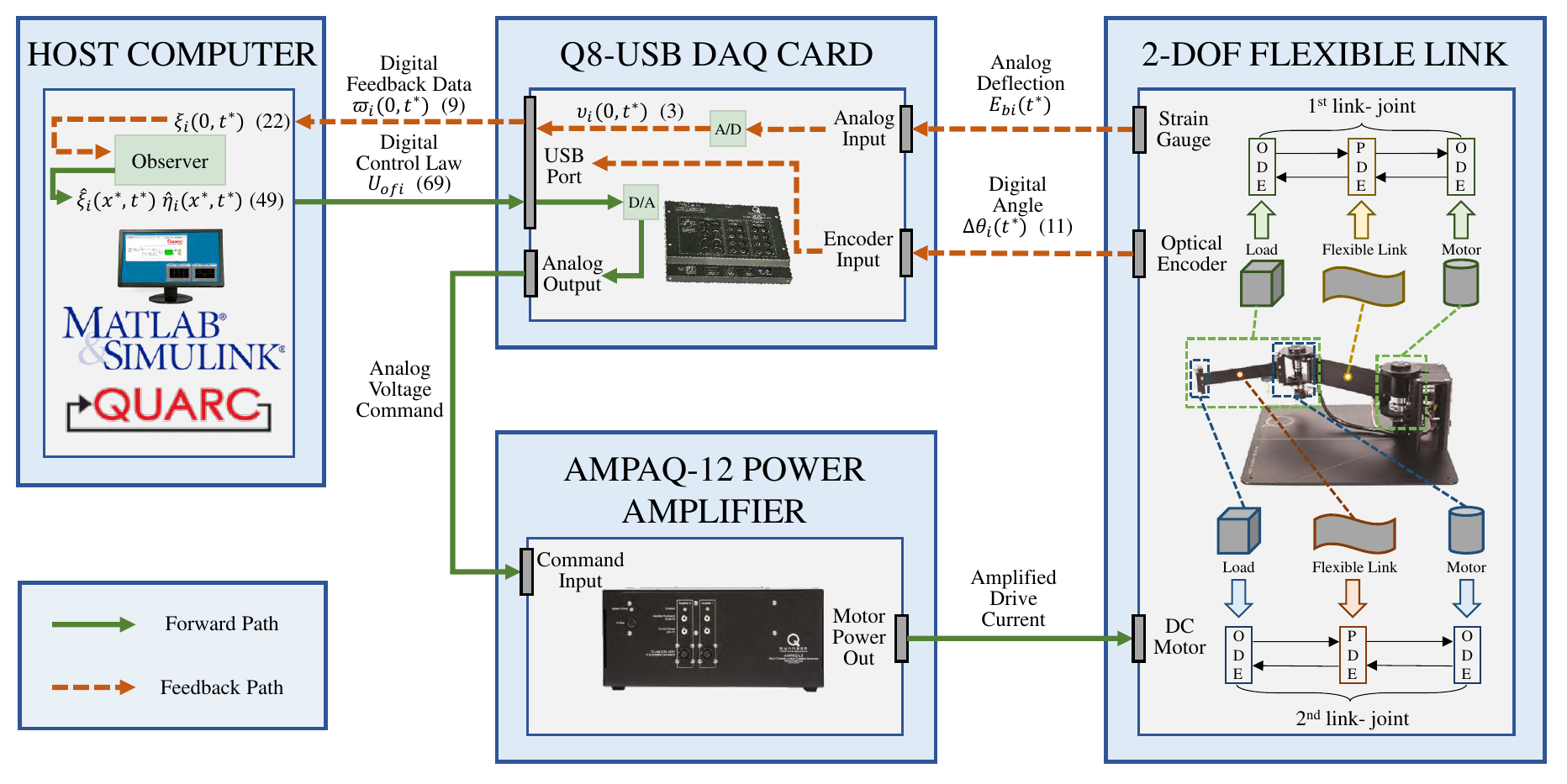}
		\caption{Hardware components of the Quanser 2DSFMR and integrated output-feedback control architecture for experimental implementation.}
		\label{fig:2DSFL Robot}
	\end{figure*}
	
	\begin{table*}[ht]
		\centering
		\caption{Performance of the proposed observer in estimating the slope of the transverse displacement $\varpi_{1,x^\ast}(x^\ast,t^\ast)$ at $x^\ast=0$ and $x^\ast=0.5L_1^\ast$. The first two rows display the observer estimates (blue), the real values (red), and the estimation error (green) in three reference profiles. The corresponding statistical analysis for each scenario is provided below, with Root-Mean-Square Error (RMSE), Mean Absolute Error (MAE), and Maximum Error (ME).}
		\renewcommand{\arraystretch}{1.5}
		\setlength{\tabcolsep}{10pt} 
		\begin{tabular}{@{} l ccc @{}}
			\toprule
			\textbf{Reference} & \textbf{Sinusoidal} & \textbf{Square} & \textbf{Sawtooth} \\
			\textbf{trajectory} & {\small Amp: $\frac{40\pi}{180}$, Freq: $0.2$Hz} & {\small Amp: $\frac{35\pi}{180}$, Freq: $0.1$Hz} & {\small Amp: $\frac{35\pi}{180}$, Freq: $0.2$Hz} \\
			\midrule
			
			\makecell[l]{\textbf{Slope of link} \\ \textbf{at} $x^\ast=0$} &
			\includegraphics[width=0.25\textwidth, valign=m]{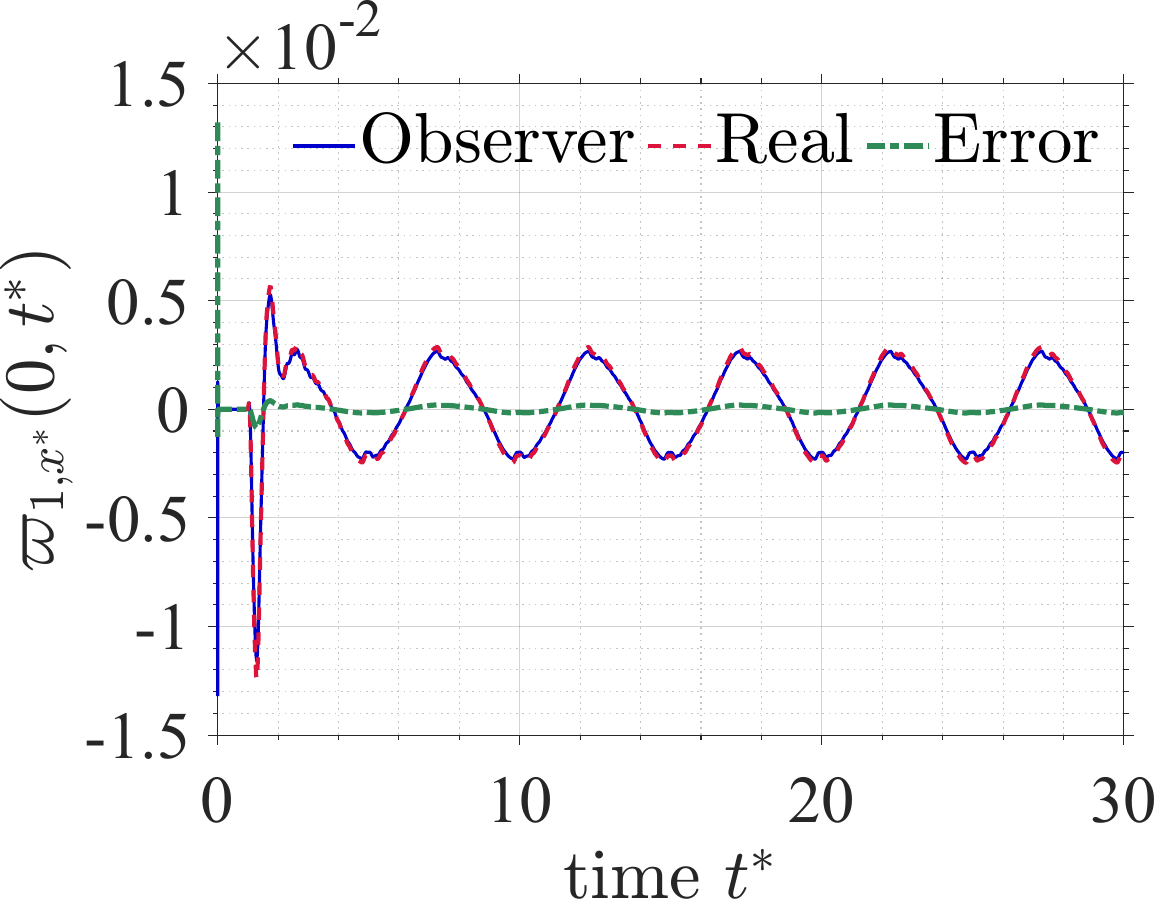} &
			\includegraphics[width=0.25\textwidth, valign=m]{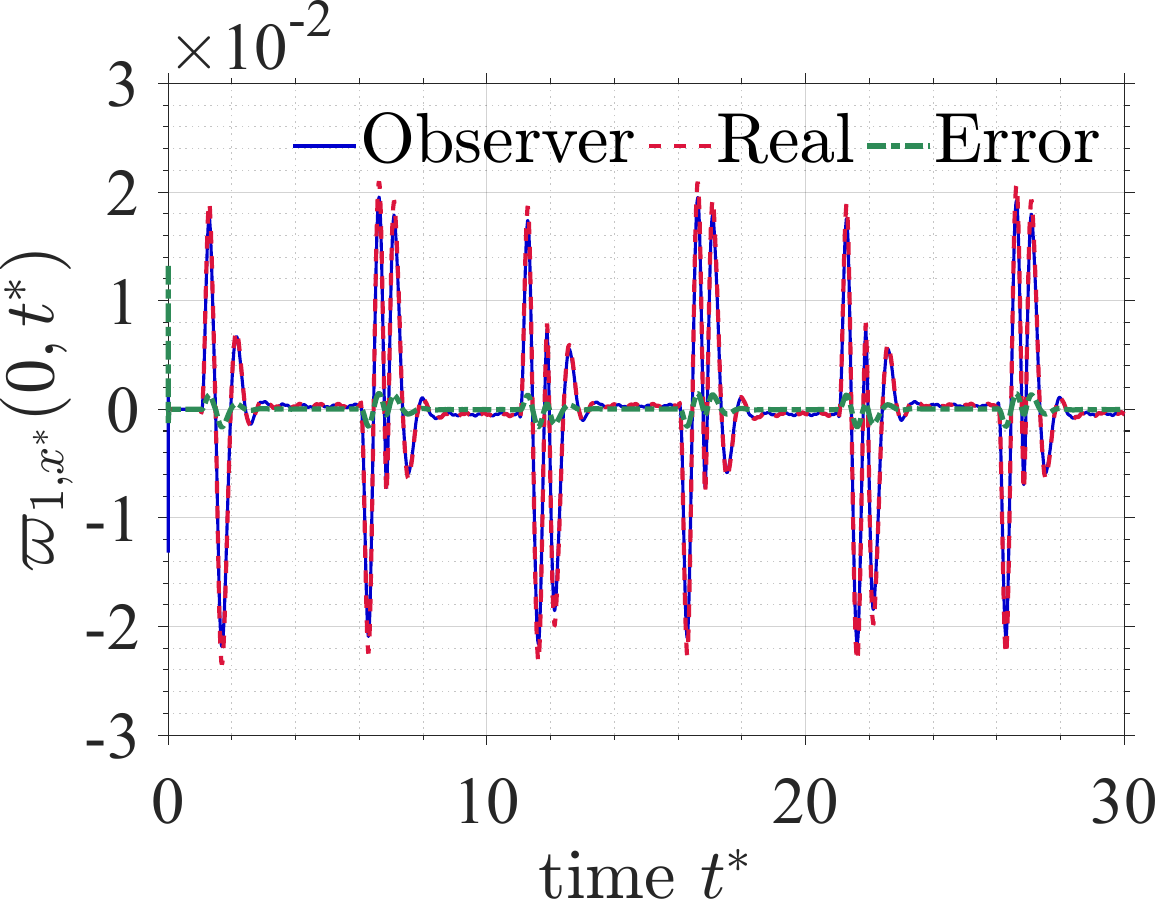} &
			\includegraphics[width=0.25\textwidth, valign=m]{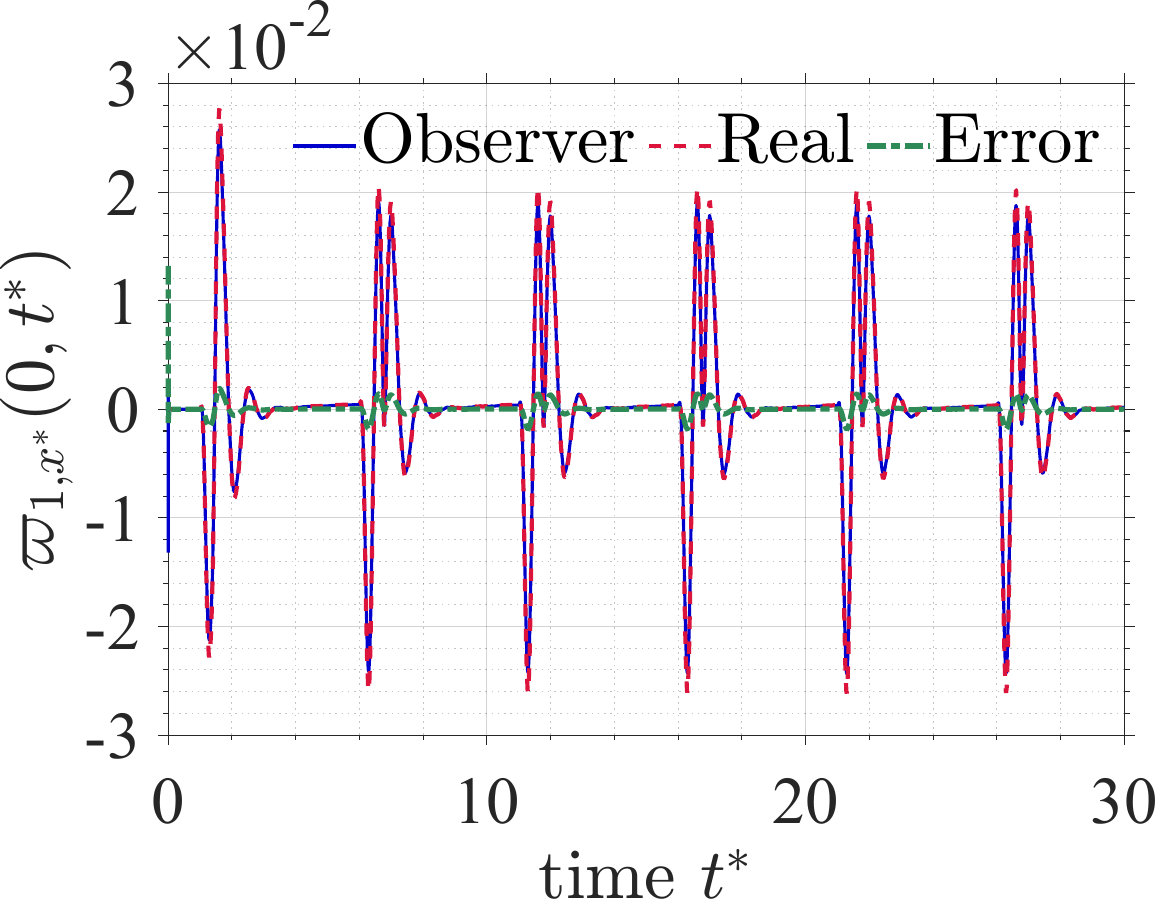} \\
			\addlinespace[3pt]
			
			\makecell[l]{\textbf{Slope of link} \\ \textbf{at} $x^\ast=0.5L_1^\ast$} &
			\includegraphics[width=0.25\textwidth, valign=m]{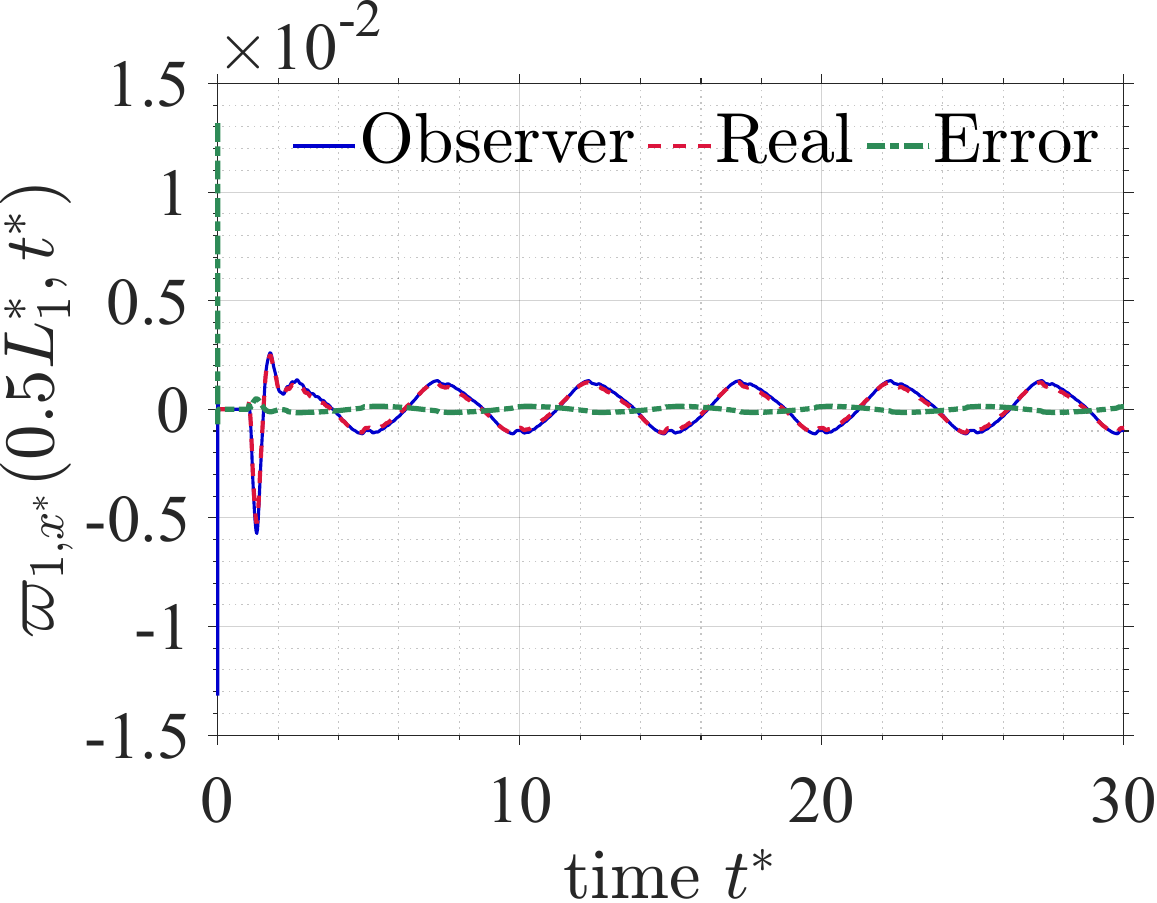} &
			\includegraphics[width=0.25\textwidth, valign=m]{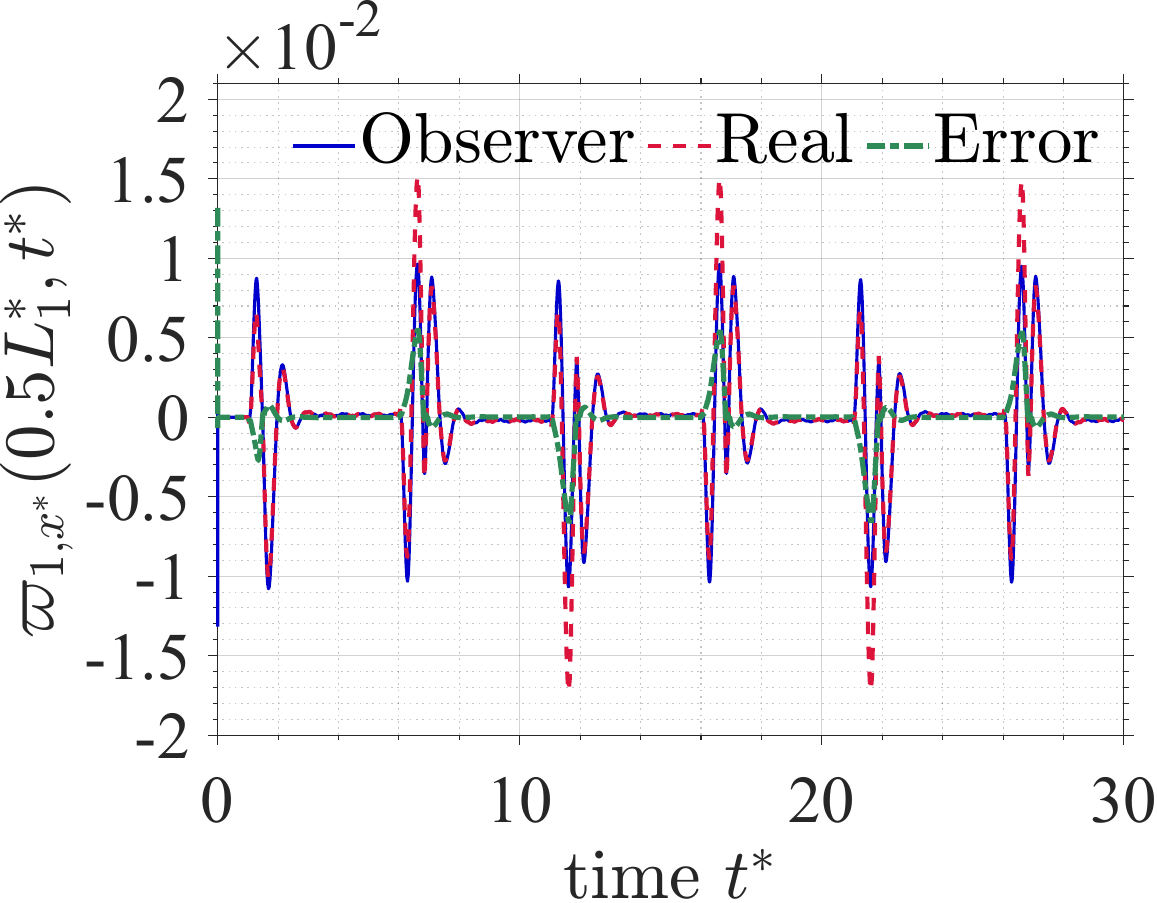} &
			\includegraphics[width=0.25\textwidth, valign=m]{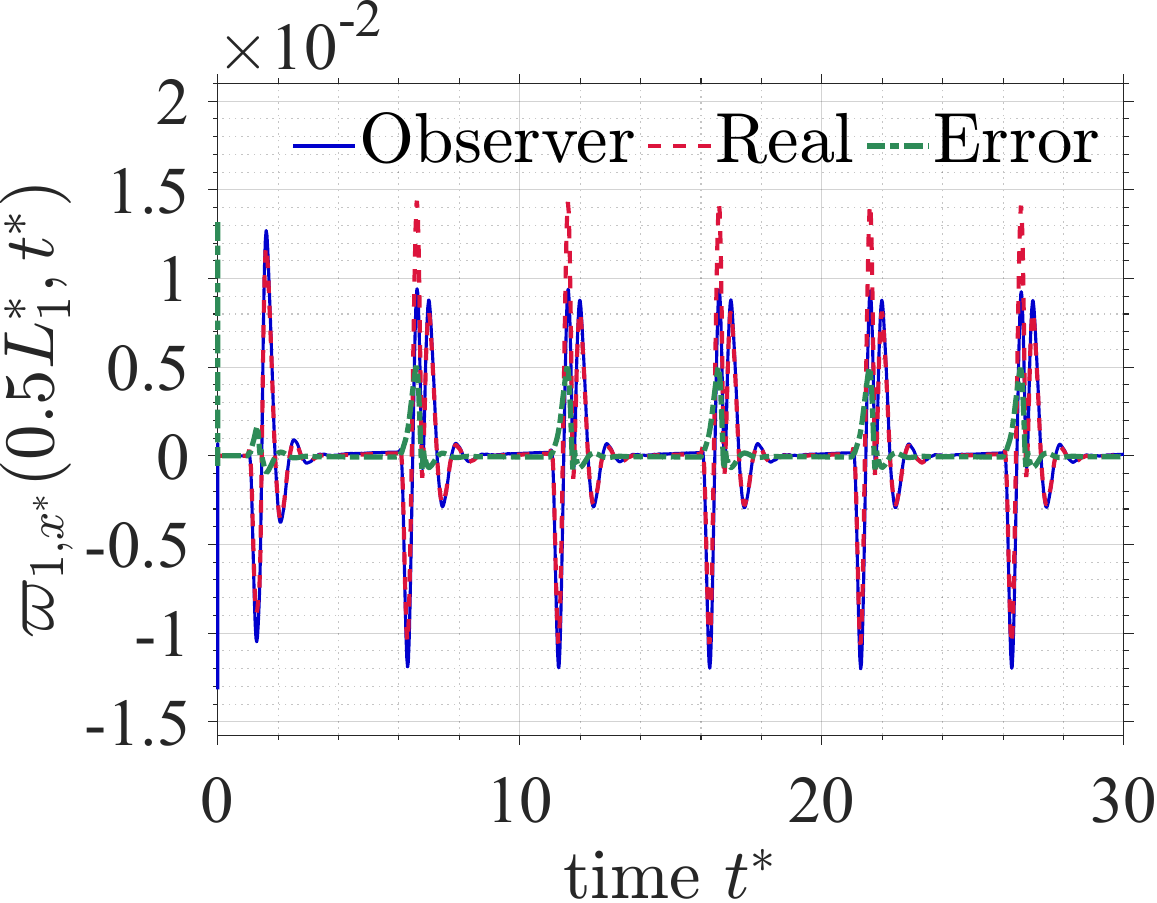} \\
			\addlinespace[3pt]
			
			\midrule			
			{\footnotesize
				\begin{tabular}{@{} l @{}}
					\textbf{Error index} \\
					\midrule
					$\varpi_{1,x^\ast}(0,t^\ast)$ \\
					$\varpi_{1,x^\ast}(0.5,t^\ast)$ \\
			\end{tabular}} &
			{\footnotesize
				\begin{tabular}{@{} ccc @{}}
					ME & RMSE & MAE \\
					\midrule
					0.01321 & 0.00016 &  0.00012 \\
					0.01318 & 0.00013 &  0.00009 \\
			\end{tabular}} &
			{\footnotesize
				\begin{tabular}{@{} ccc @{}}
					ME & RMSE & MAE \\
					\midrule
					0.01321 & 0.00048 & 0.00024 \\
					0.01318 & 0.00130 & 0.00047 \\
			\end{tabular}} &
			{\footnotesize
				\begin{tabular}{@{} ccc @{}}
					ME & RMSE & MAE \\
					\midrule
					0.01321 & 0.00050 & 0.00024 \\
					0.01318 & 0.00099 & 0.00037 \\
			\end{tabular}} \\
			\bottomrule
		\end{tabular}
		\label{Tab:observer_performance}
	\end{table*}
	
	\section{Experiment on a 2DSFMR} \label{Experiment}
	To demonstrate the practical effectiveness of the proposed control strategies, this section presents experimental results obtained using a $2$-degree-of-freedom serial flexible manipulator robot (2DSFMR). An accompanying experimental video is available at: https://valenhyl.github.io/Tmech/.
	
	First, we detail the hardware configuration of the 2DSFMR prototype. Next, the proposed output-feedback backstepping controller developed in Sec. \ref{Modeling} is validated on the primary link-joint, and the integral robot, in comparison with a traditional LQR method with feedforward \cite{bib9} given by
	\begin{align}
		U_{lqri}=&\acute k_1\Delta \theta_i(t)+ \acute k_2\Delta \dot{\theta}_i(t)+ \acute k_3\upsilon_i(0,t)+ \acute k_4\upsilon_{i,t}(0,t) \notag \\
		&+\acute k_5 \theta_{di}(t)+\acute k_6 \dot{\theta}_{di}(t)
	\end{align}
	where $\acute k_1$, $\acute k_2$, $\acute k_3$, $\acute k_4$ and $\acute k_5$, $\acute k_6$ represent the LQR feedback and feedforward gains, respectively, which are derived from the system parameters and the chosen weight matrices.
	
	\subsection{Quanser 2DSFMR}
	Fig. \ref{fig:2DSFL Robot} illustrates the hardware components of the Quanser 2DSFMR utilized in this study.
	The two joints are actuated by a Maxon 273759 precious brush motor (90 W) and a Maxon 118752 precious brush motor (20 W), respectively. Both motors are equipped with harmonic gearheads to eliminate mechanical backlash.
	The primary flexible link (Link 1) has a width of 0.0762 $m$, and a thickness of 0.00127 $m$, while the secondary link (Link 2) has a width of 0.0381 $m$, and a thickness of 0.000889 $m$.
	Moreover, the dynamic model parameters of the primary link (link 1) and the secondary link (link 2), derived from direct measurement and the user manual given by Quanser, are listed in Tab. \ref{tab:parameters}. Additionally, an external power supply powers the two joint position limit switches and the two strain gauge sensors.
	
	Recalling Remark \ref{Rem:observer}, the requisite physical signals, i.e. $\theta_1(t^\ast)$, $E_{b1}(t^\ast)$, $\theta_2(t^\ast)$, $E_{b2}(t^\ast)$, are acquired using two quadrature optical encoders for the digital angular positions of the motors and a strain gauge sensor mounted at the clamped base of each flexible link, with the latter connected to onboard amplifier circuits.
	
	The integrated output-feedback control architecture for experimental implementation is also depicted in Fig. \ref{fig:2DSFL Robot}. The hardware interface relies on a Quanser AMPAQ current amplifier and a Q8-USB data acquisition (DQA) device. The proposed output-feedback controller $\tau_{m1}$, $\tau_{m2}$ obtained from \eqref{eq:Ui1}, \eqref{U_output_fedback} and the corresponding observers obtained from \eqref{eq:observer_xi1}--\eqref{eq:observer_X1} are implemented in MATLAB/Simulink. During operation, real-time sensor data is captured via the Q8-USB DQA to feed the observers, while the computed control laws are converted into analog current signals to drive the motors via the AMPAQ amplifier.
	Moreover, each experimental trial runs for a total duration of 31s, with the initial 1$s$ providing a buffer for sensor error compensation.
	\begin{figure*}[ht]
		\centering
		
		\begin{subfigure}{\textwidth}
			\centering
			\begin{subfigure}[b]{0.3\textwidth}
				\centering
				\includegraphics[width=\textwidth]{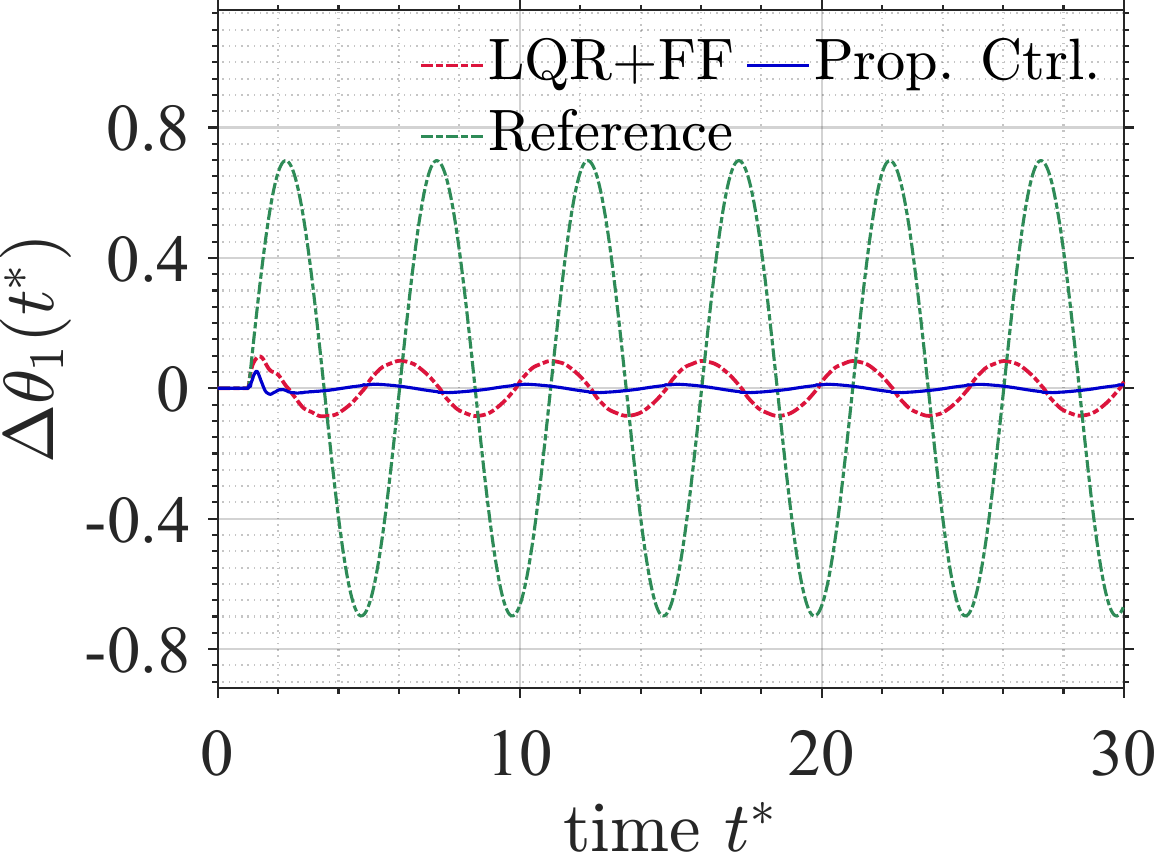}
			\end{subfigure}
			\hfill
			\begin{subfigure}[b]{0.3\textwidth}
				\centering
				\includegraphics[width=\textwidth]{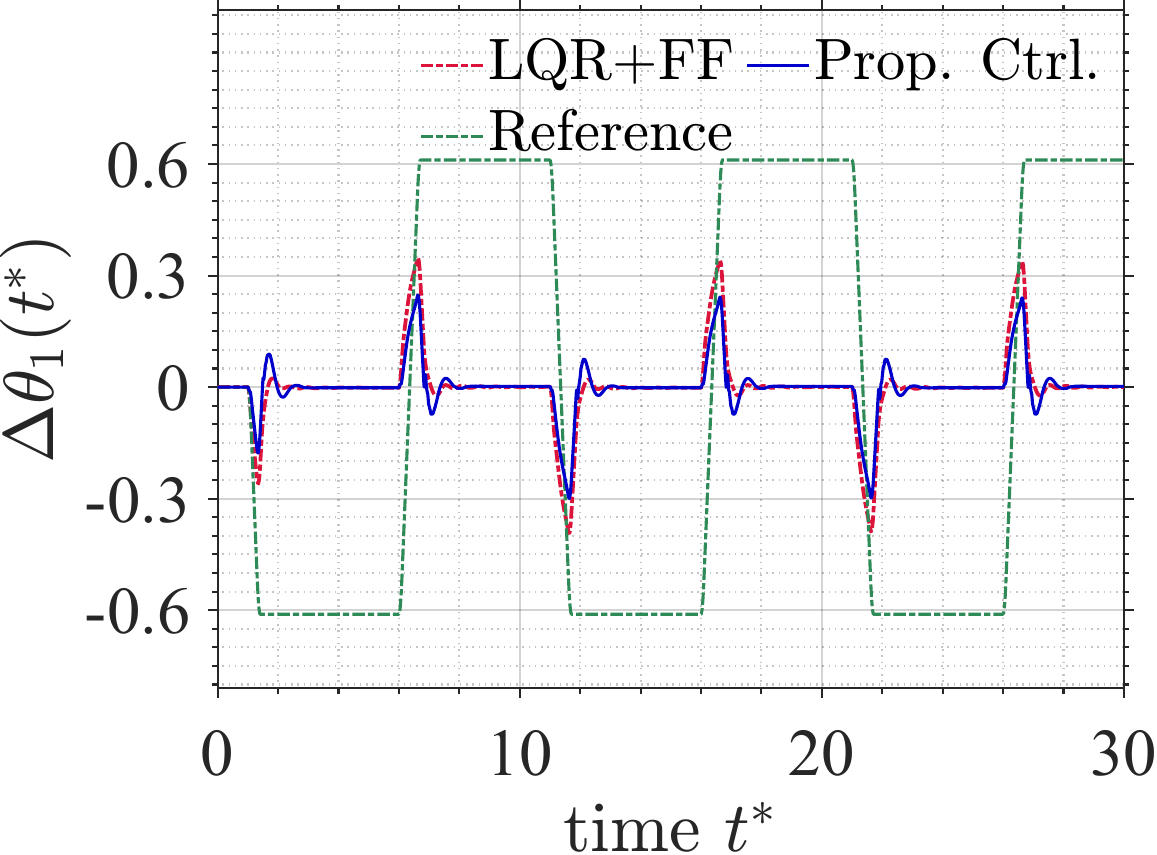}
			\end{subfigure}
			\hfill
			\begin{subfigure}[b]{0.3\textwidth}
				\centering
				\includegraphics[width=\textwidth]{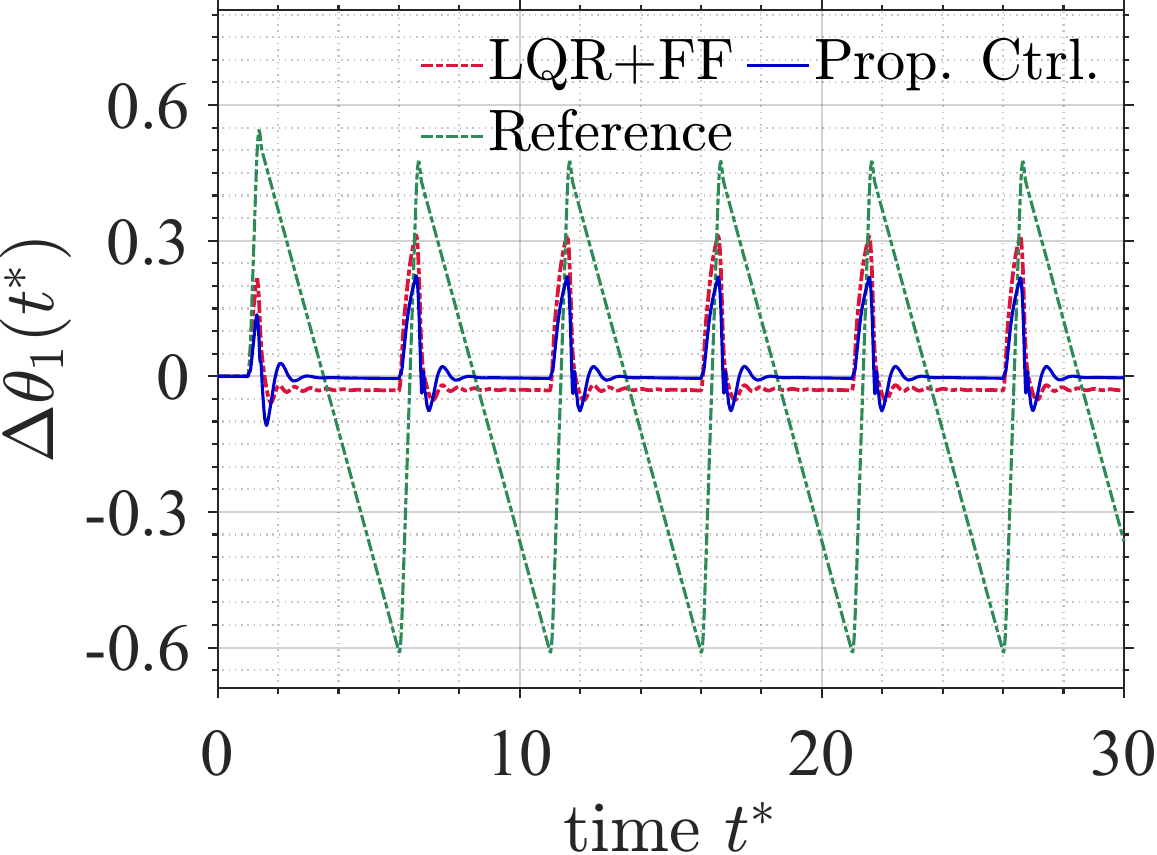}
			\end{subfigure}
			\caption{Angular tracking error $\Delta \theta_1(t^\ast)$.}
			\label{fig:angular_error}
		\end{subfigure}
		
		\vspace{1.5em} 
		
		\begin{subfigure}{\textwidth}
			\centering
			\begin{subfigure}[b]{0.3\textwidth}
				\centering
				\includegraphics[width=\textwidth]{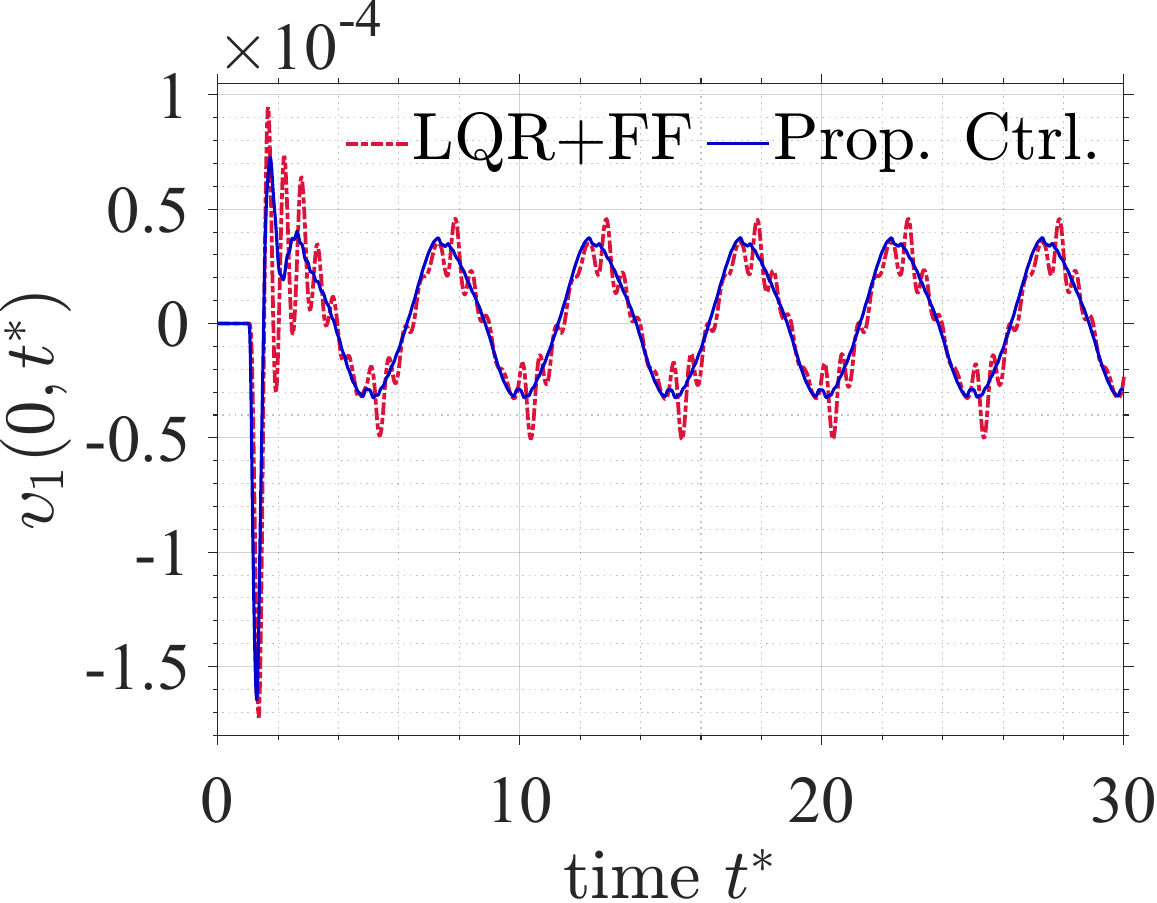}
			\end{subfigure}
			\hfill
			\begin{subfigure}[b]{0.3\textwidth}
				\centering
				\includegraphics[width=\textwidth]{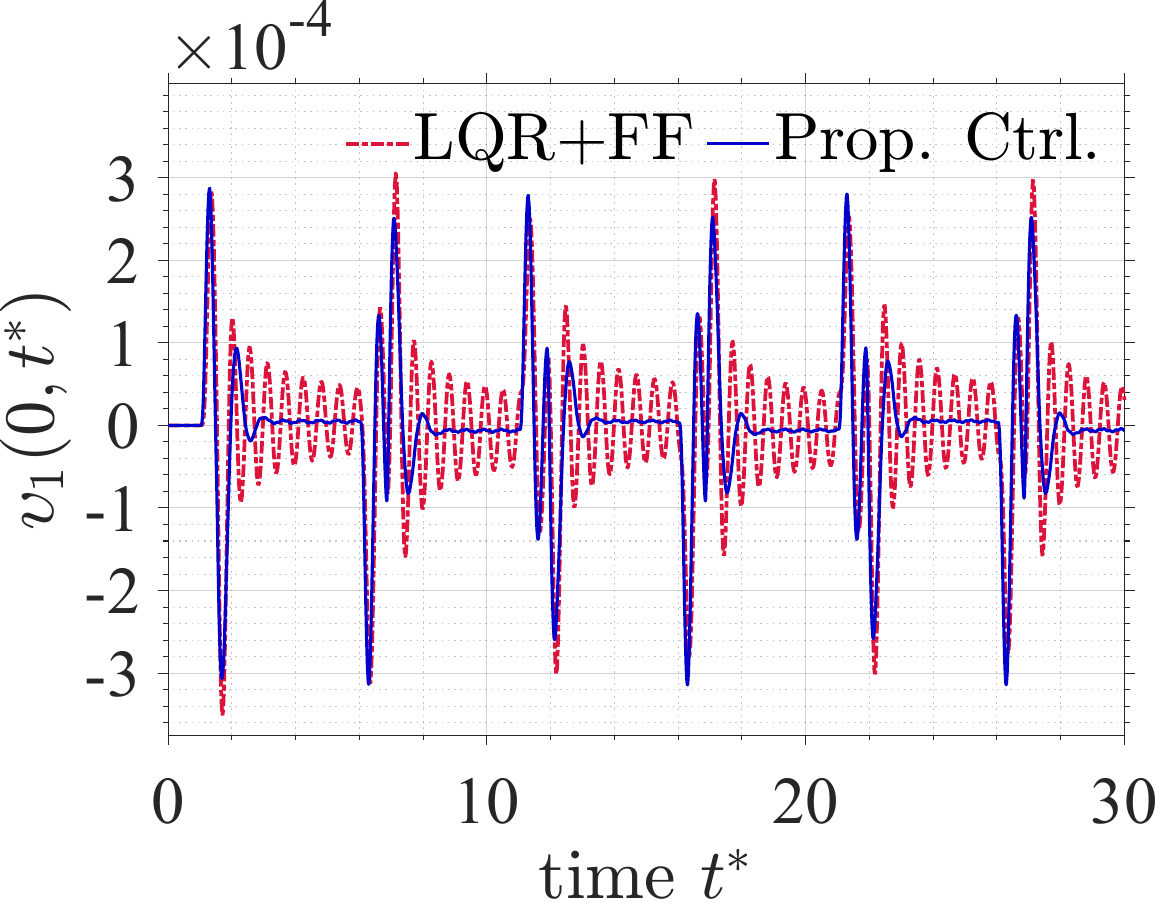}
			\end{subfigure}
			\hfill
			\begin{subfigure}[b]{0.3\textwidth}
				\centering
				\includegraphics[width=\textwidth]{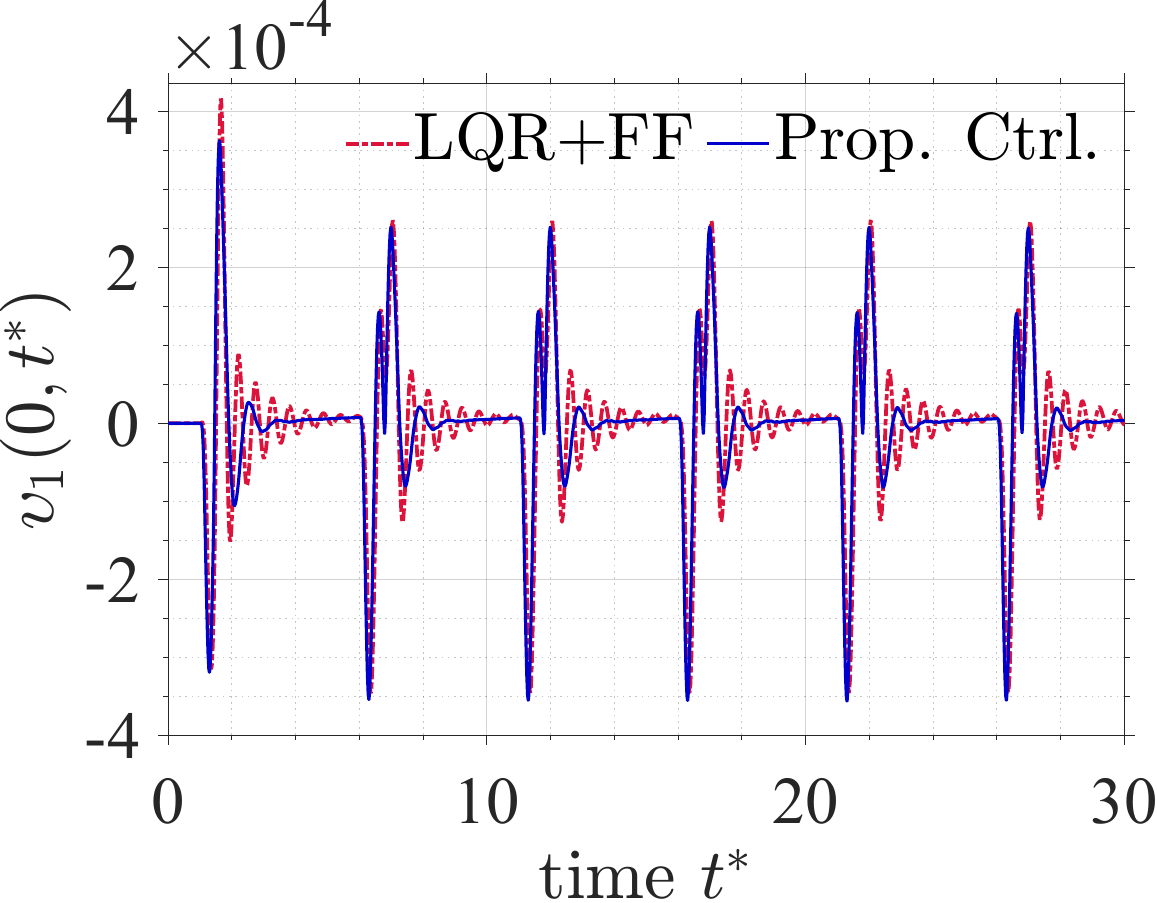}
			\end{subfigure}
			\caption{Transverse displacement $\upsilon_1(1,t^\ast)$.}
			\label{fig:transverse_disp}
		\end{subfigure}
		
		\vspace{1.5em}
		
		\begin{subfigure}{\textwidth}
			\centering
			\begin{subfigure}[b]{0.3\textwidth}
				\centering
				\includegraphics[width=\textwidth]{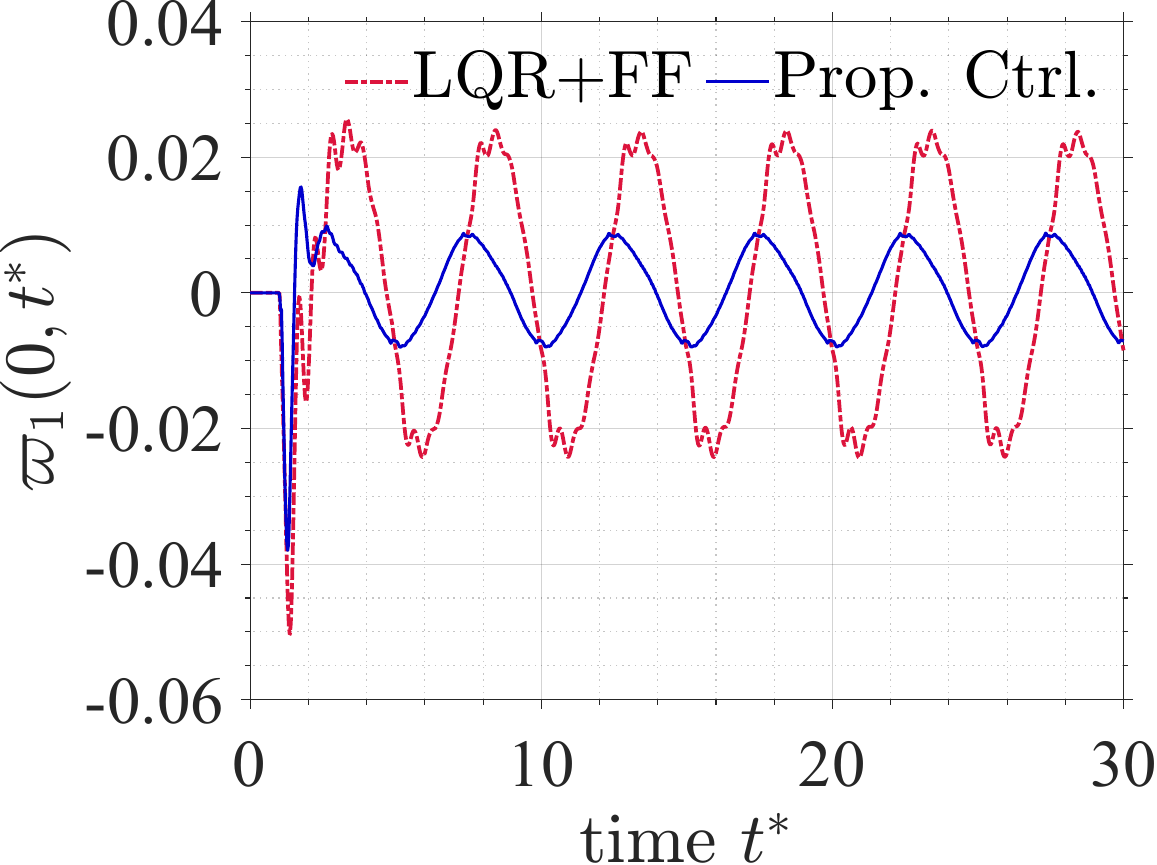}
			\end{subfigure}
			\hfill
			\begin{subfigure}[b]{0.3\textwidth}
				\centering
				\includegraphics[width=\textwidth]{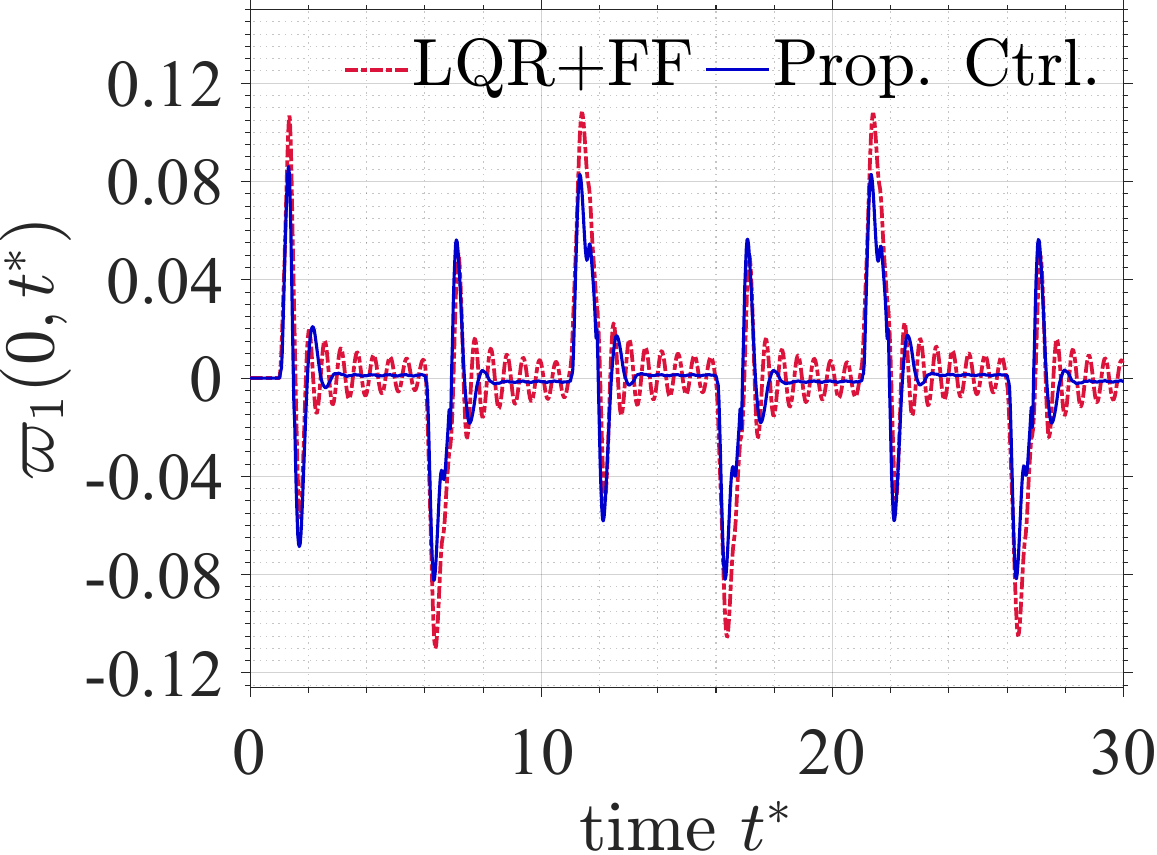}
			\end{subfigure}
			\hfill
			\begin{subfigure}[b]{0.3\textwidth}
				\centering
				\includegraphics[width=\textwidth]{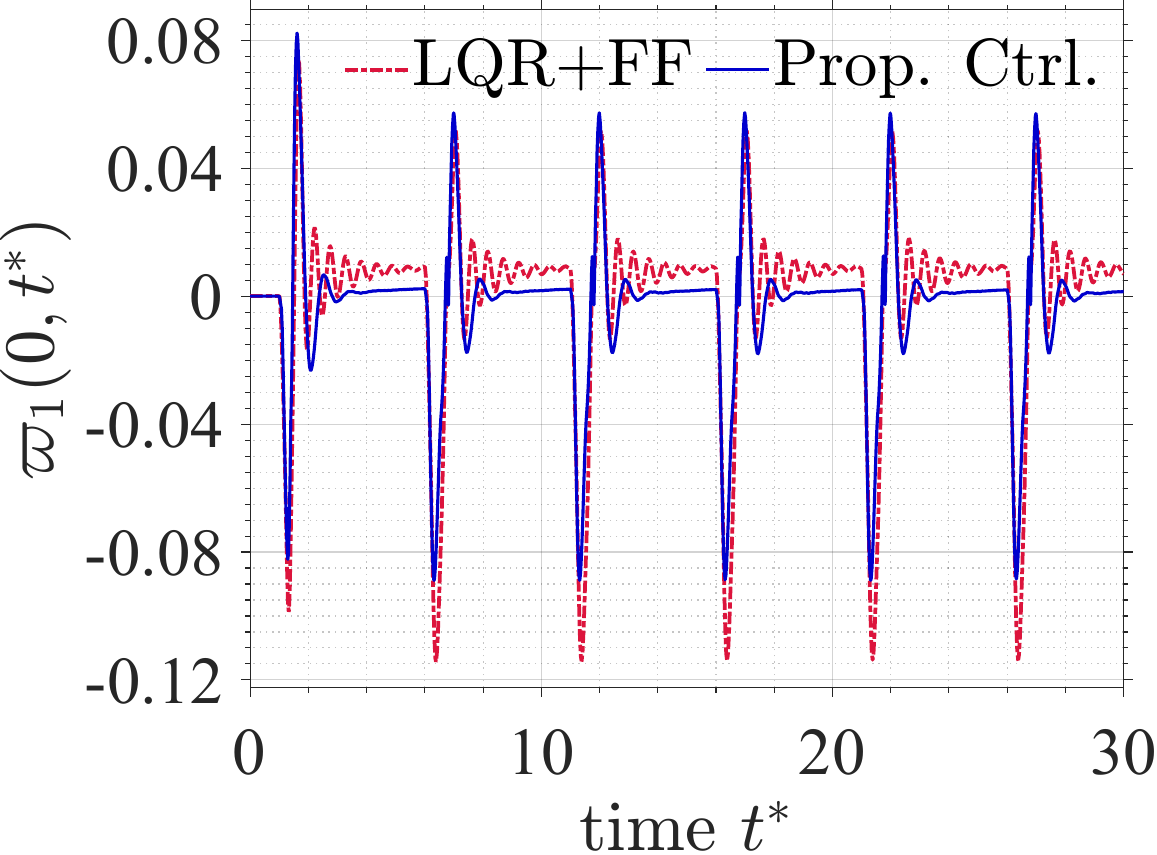}
			\end{subfigure}
			\caption{Tip tracking error $\varpi_1(1,t^\ast)$.}
			\label{fig:tip_error}
		\end{subfigure}
		
		\caption{Performance validation of the primary link-joint control subsystem under the sinusoidal (left), square (middle), and sawtooth (right) trajectories. (a) Angular tracking error $\Delta \theta_1(t)$ for the backstepping method (blue) and the LQR+feedforward (red) relative to the reference (green). (b)Transverse displacement $\upsilon_1(0,t)$. (c) Tip tracking error $\varpi_1(0,t)$. Trajectory parameters: Sinusoidal (Amplify: $\frac{40\pi}{180}$, Frequency: $0.2$Hz), Square (Amplify: $\frac{35\pi}{180}$, Frequency: $0.1$Hz), and Sawtooth (Amplify: $\frac{35\pi}{180}$, Frequency: $0.2$Hz).}
		\label{fig:control_link1_results}
	\end{figure*}
	
	\subsection{Trajectory Tracking on Primary Link of 2DSFMR}\label{experiment:link-joint1}
	In this section, trajectory tracking experiments are conducted on the primary link-joint of the 2DSFMR, with the secondary link modeled as a lumped mass at the distal end. For the experimental setup, the observer is intentionally initialized with $\hat{\xi}_1(x^\ast,0)=1$, $\hat{\eta}_1(x^\ast,0)=1$ to test its performance under large initial errors, and the control parameter is set to $\acute c_1=0.5$.
	
	To test the effectiveness of the output-feedback control framework presented in Sec. \ref{Modeling} under more challenging conditions,
	the reference trajectory $\theta_{d1}(t^\ast)$ is defined by three distinct profiles:
	
	1) Sinusoidal wave (Amplify: $40\pi/180$, Frequency: $0.2$ Hz): Assesses the high-accuracy continuous tracking performance of control strategies.
	
	2) Square wave (Amplify: $35\pi/180$, Frequency: $0.1$ Hz): Assesses the vibration suppression capabilities.
	
	3) Sawtooth wave (Amplify: $35\pi/180$, Frequency: $0.2$ Hz): Assesses both dynamic tracking precision and transient vibration suppression.
	
	We evaluate the proposed observer's performance by analyzing the slope of the transverse displacement $\varpi_{1,x^\ast}(x^\ast,t^\ast)$, reconstructed from the observer output $\hat{\eta}_1(x^\ast,t^\ast)$ and $\hat{\xi}_1(x^\ast,t^\ast)$ at spatial locations $x^\ast=0$ and $x^\ast=0.5L_1^\ast$. The corresponding results are summarized in Tab \ref{Tab:observer_performance}, which validate the capability of the observer \eqref{eq:observer_xi1}--\eqref{eq:observer_X1} to accurately estimate the states required by the controller \eqref{U_output_fedback}.
	Across all three trajectory scenarios, the results show that the state estimates of $\varpi_{1,x^\ast}(x^\ast,t^\ast)$, at $x^\ast=0$ and $x^\ast=0.5L_1^\ast$, converge rapidly, despite the large initial estimation error and the abrupt signal variations in the square and sawtooth waves. The statistical results further confirm this behavior. Although the ME reaches 0.013 due to the injected initial error, both the MAE and RMSE remain very low, demonstrating the robustness and accuracy of the proposed estimation scheme.
	
	Furthermore, the closed-loop output-feedback control performance is illustrated in Fig. \ref{fig:control_link1_results}, which provides comparative results against the baseline LQR+feedforward. The control strategies are accessed based on their the trajectory tracking and vibration suppression capabilities, quantified by the angular tracking error $\Delta \theta_1(t^\ast)$ \eqref{det_theta1}, the transverse displacement $\upsilon_1(0,t^\ast)$ \eqref{parameters_dimensionless} in the moving frame $x_{m1}y_{m1}$ of the joint representing the link vibration, the tip displacement $\varpi_1(0,t^\ast)$ \eqref{varpi} in the fixed frame $x_{f1}y_{f1}$ representing the trajectory tracking error. As shown in the Fig. \ref{fig:control_link1_results} a) and b), the proposed controller achieves a smaller joint tracking error and faster vibration suppression across all three reference trajectories compared to the LQR+feedforward approach. Fig. \ref{fig:control_link1_results} c) shows that the proposed controller yields a smaller tip tracking error than the LQR approach, and moreover, it is obvious that the proposed controller achieves faster convergence of the tip tracking error to zero under the square and sawtooth trajectories. This indicates that the proposed controller enables faster convergence of the flexible manipulator's tip to the references while achieving higher tracking accuracy than the LQR+feedforward controller.
	
	\begin{figure*}[ht]
		\centering
		\begin{subfigure}{\textwidth}
			\centering
			\begin{subfigure}[b]{0.325\textwidth}
				\centering
				\includegraphics[width=\textwidth]{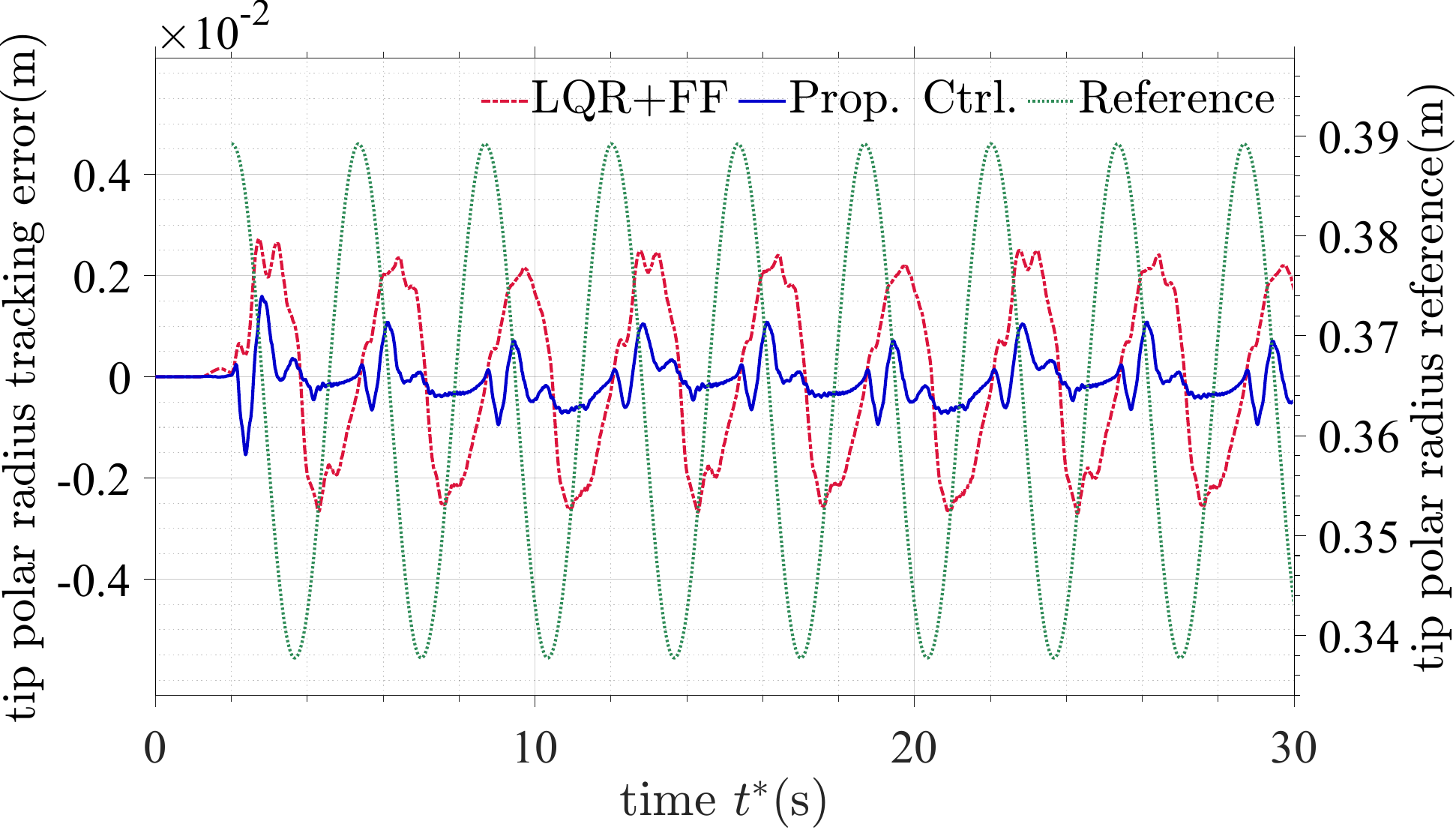}
			\end{subfigure}
			\hfill
			\begin{subfigure}[b]{0.325\textwidth}
				\centering
				\includegraphics[width=\textwidth]{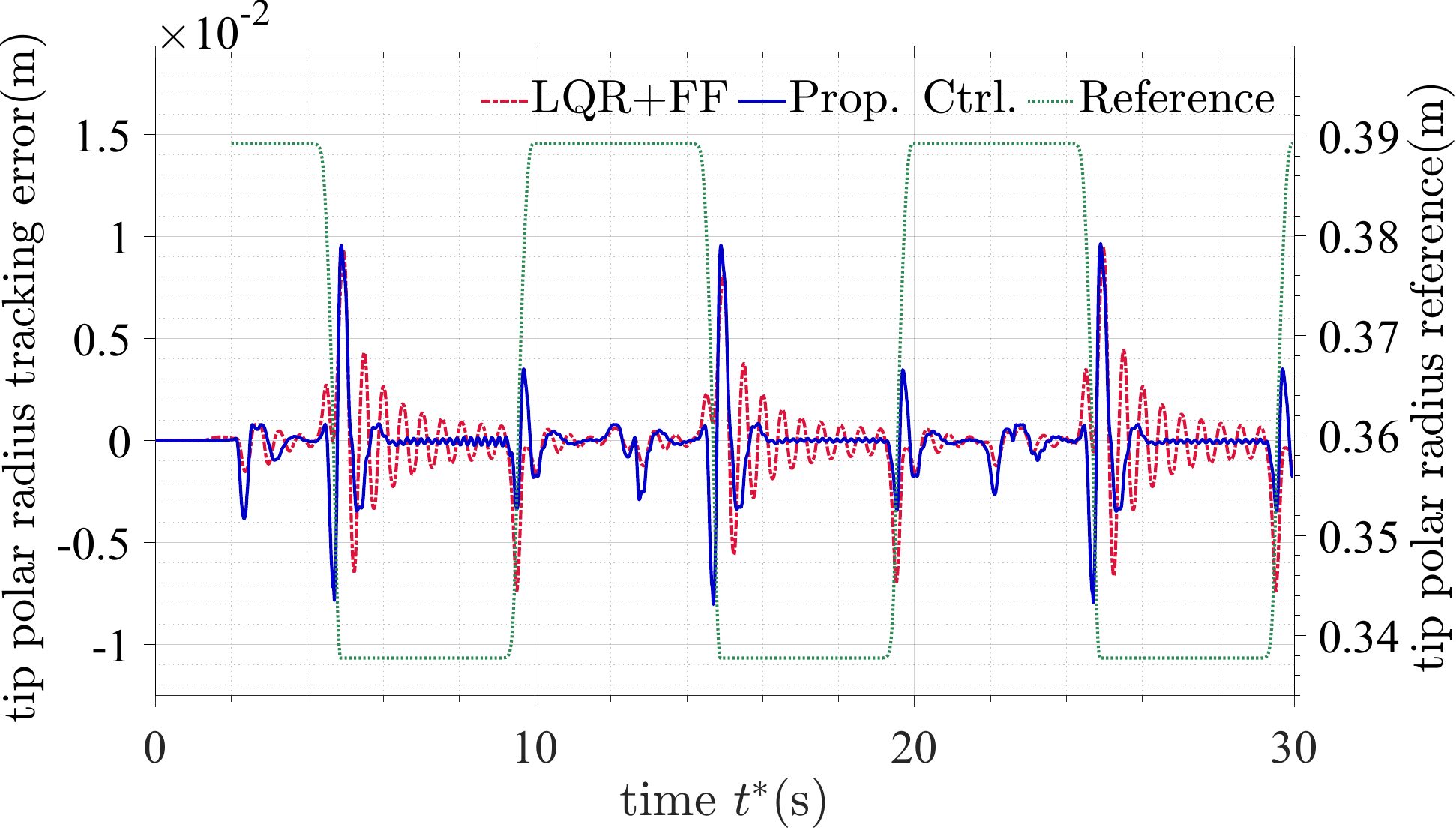}
			\end{subfigure}
			\hfill
			\begin{subfigure}[b]{0.325\textwidth}
				\centering
				\includegraphics[width=\textwidth]{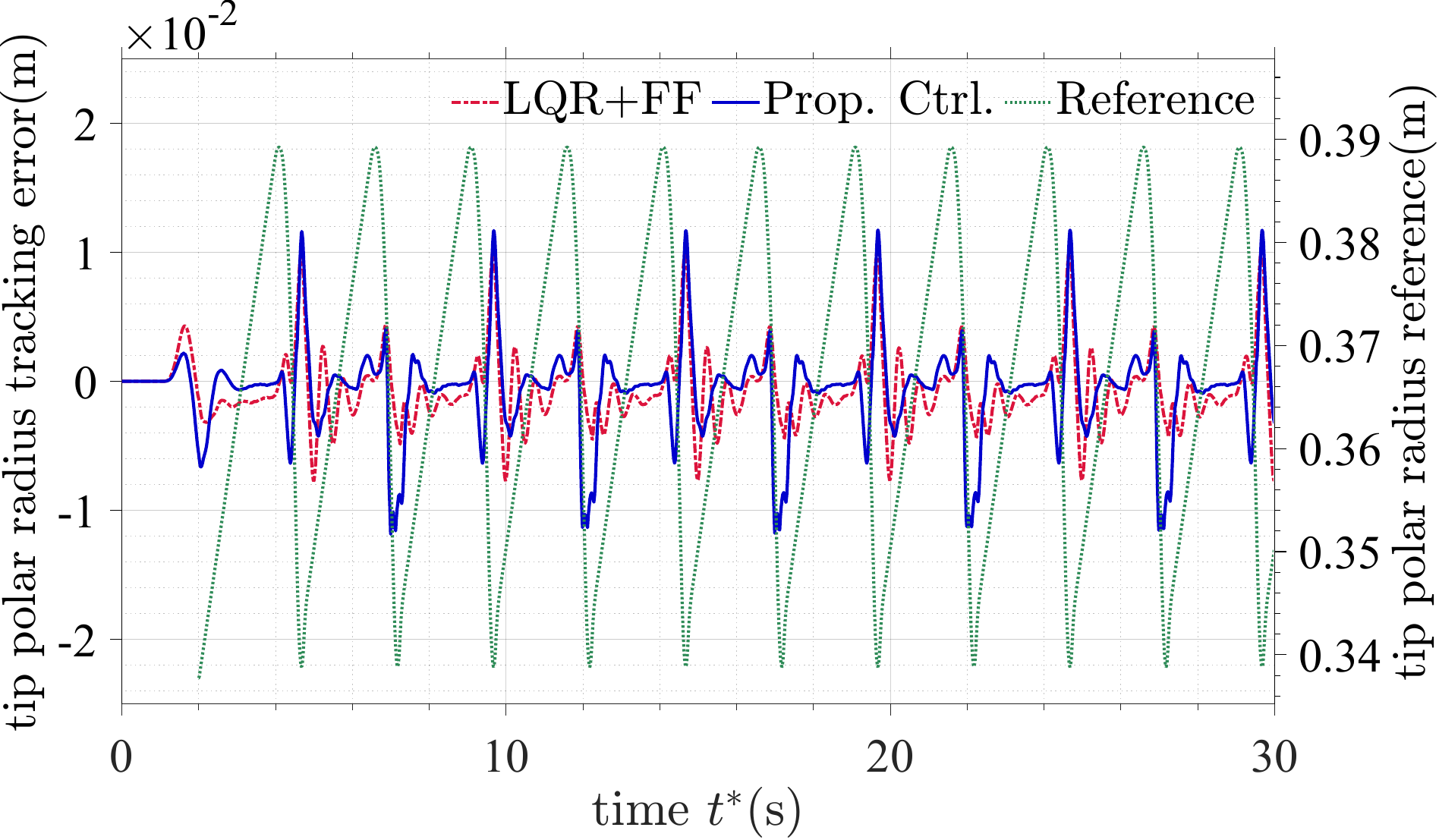}
			\end{subfigure}
			\caption{Tracking performance for tip polar radius $r(t^\ast)$.}
			\label{fig:radius_results}
		\end{subfigure}
		
		\vspace{1em} 
		
		\begin{subfigure}{\textwidth}
			\centering
			\begin{subfigure}[b]{0.325\textwidth}
				\centering
				\includegraphics[width=\textwidth]{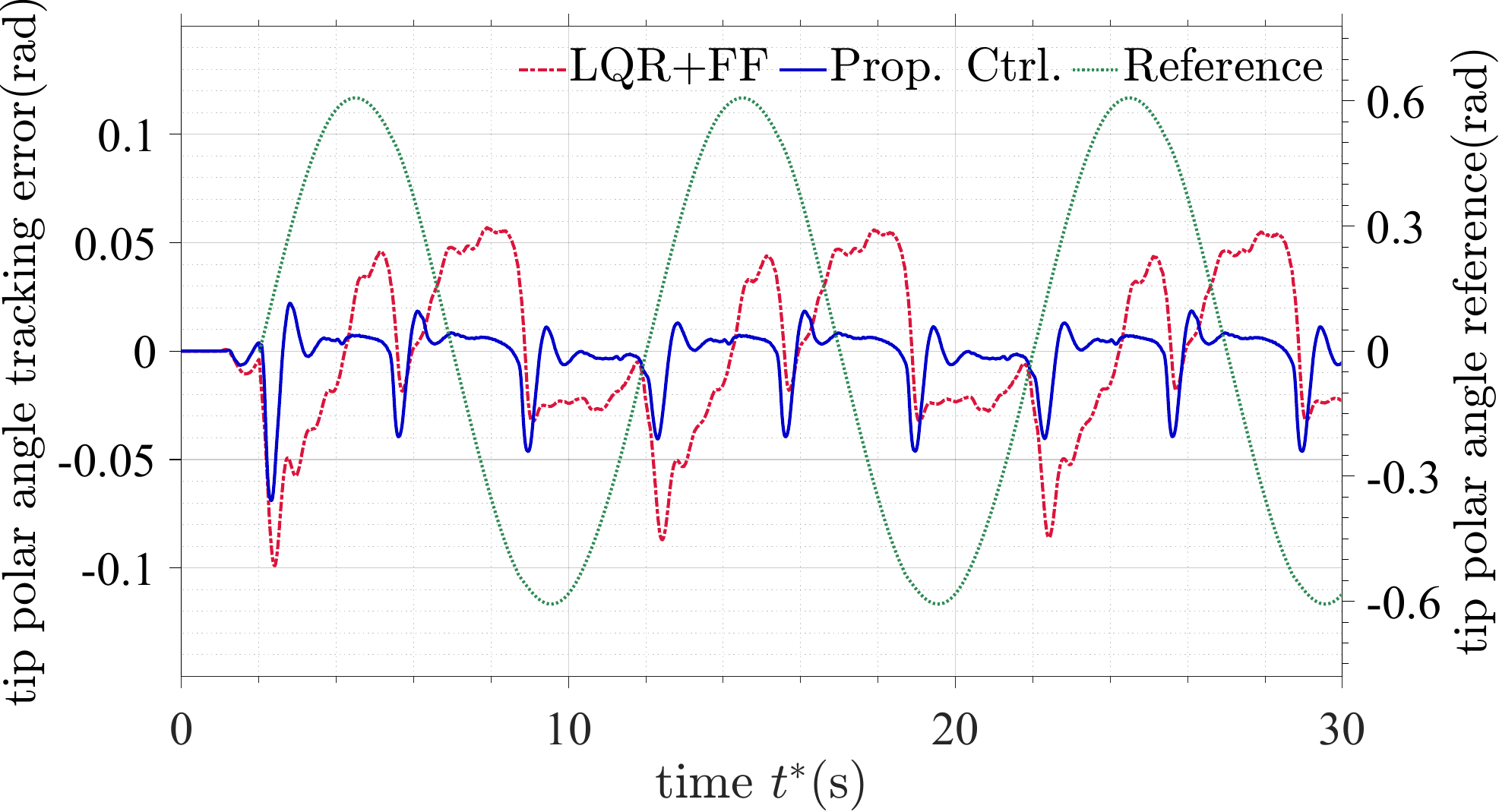}
			\end{subfigure}
			\hfill
			\begin{subfigure}[b]{0.325\textwidth}
				\centering
				\includegraphics[width=\textwidth]{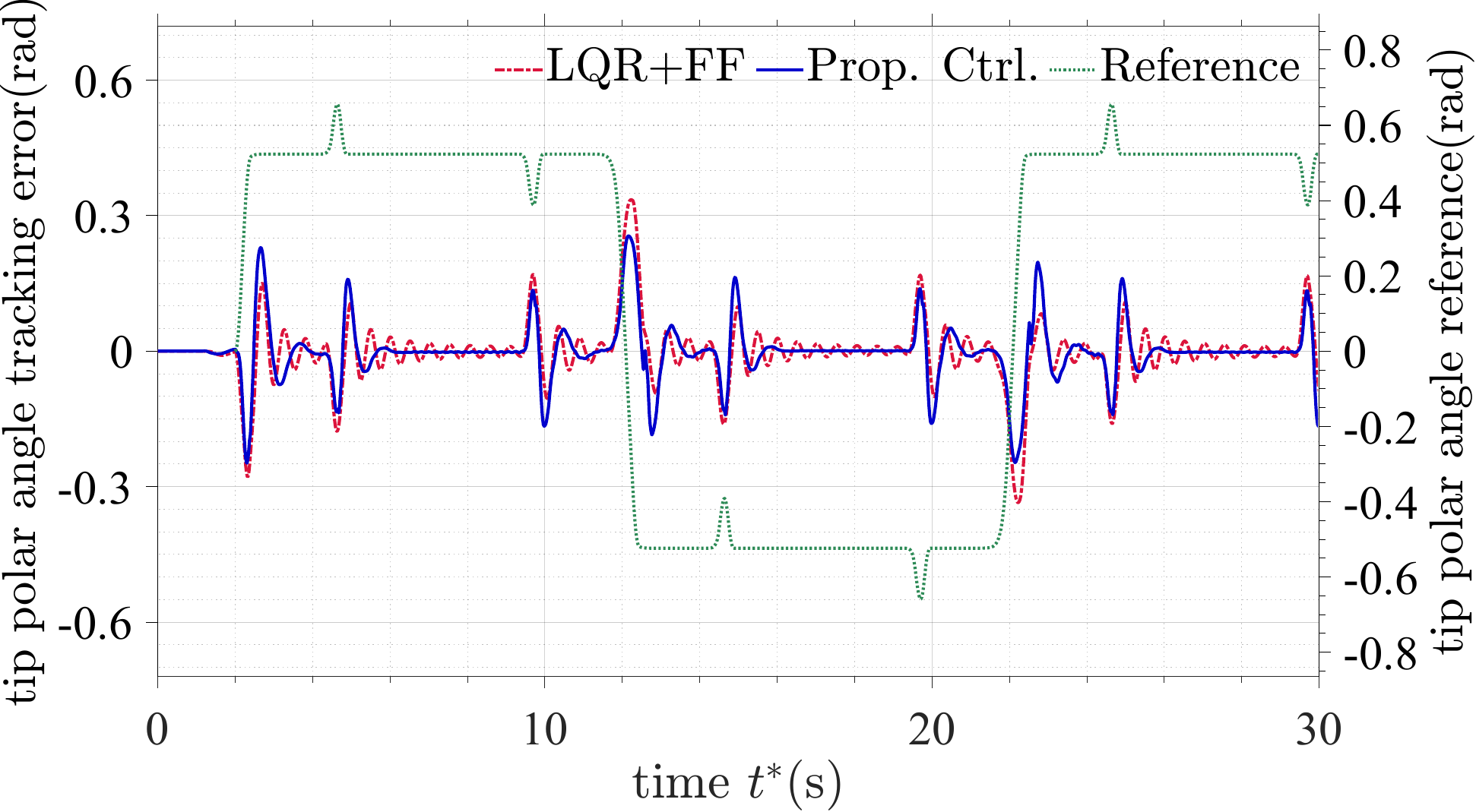}
			\end{subfigure}
			\hfill
			\begin{subfigure}[b]{0.325\textwidth}
				\centering
				\includegraphics[width=\textwidth]{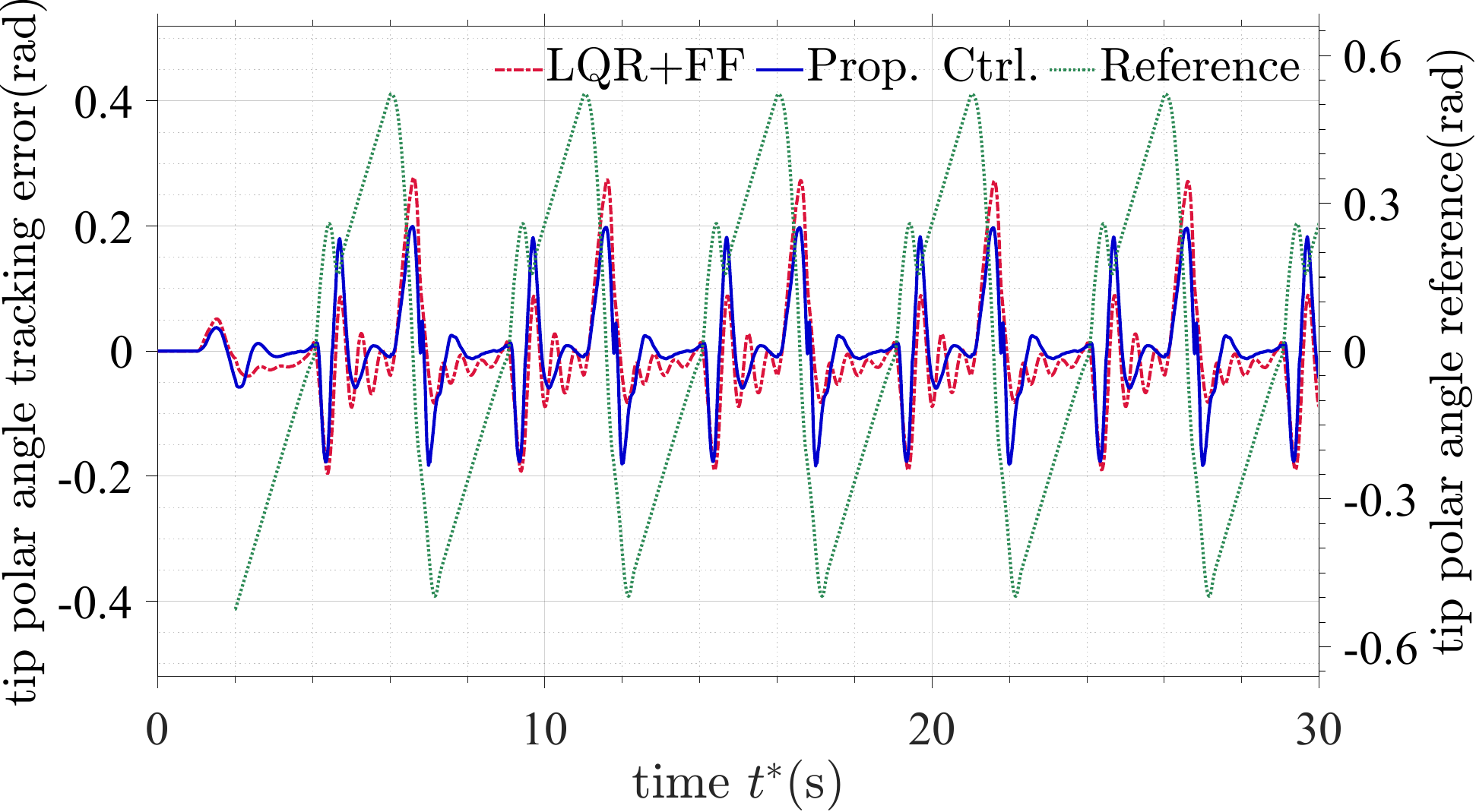}
			\end{subfigure}
			\caption{Tracking performance for tip polar angle $\varphi(t^\ast)$.}
			\label{fig:angle_results}
		\end{subfigure}
		
		\caption{Experimental performance comparison of the 2DSFMR under sinusoidal (left), square (middle), and sawtooth (right) trajectories. (a) Tip polar radius tracking error for the backstepping method (blue), the LQR+feedforward (red) relative to the reference (green). (b) Corresponding tip polar angle tracking error for the same control strategies.}
		\label{fig:complete_tracking_results}
	\end{figure*}
	
	\subsection{Trajectory Tracking on 2DSFMR}
	We implement trajectory tracking control on the 2DSFMR for three profiles defined in polar coordinates $(r,\varphi)$.
	\begin{enumerate}
		\item Sinusoidal: $r_d(t^\ast)=\acute r_1+\acute r_2\cos(1.2\pi t^\ast), \varphi_d(t^\ast)=\frac{35\pi}{180}\sin(0.4\pi t^\ast),$
		\item Square:  $r_d(t^\ast)=\acute r_1+\acute r_2sgn(\cos(0.4\pi t^\ast)), \varphi_d(t^\ast)=\frac{35\pi}{180}sgn(\sin(0.2\pi t^\ast)),$	
		\item Sawtooth: $r_d(t^\ast)=\acute r_1+\acute r_2(\lfloor 0.4t^\ast+0.5 \rfloor-0.4t^\ast), \varphi_d(t^\ast)=\frac{35\pi}{180}(0.2t^\ast-\lfloor 0.2t^\ast+0.5 \rfloor),$
	\end{enumerate}
	where $sgn(\cdot)$ is the sign function, $\lfloor \cdot \rfloor$ is the floor function, $\acute r_1=\frac{(L_1^\ast+L_2^\ast)(2+\sqrt{3})}{4}$, $\acute r_2=\frac{(L_1^\ast+L_2^\ast)(2-\sqrt{3})}{4}$.	
	Then, the desired joint angles $\theta_{d1}(t^\ast)$ and $\theta_{d2}(t^\ast)$ obtained via inverse kinematics are expressed as follows
	\begin{align}
		&\theta_{d1}=\varphi_d-\arctan\bigg( \frac{L_2^\ast\sin(\theta_{d2})}{L_1^\ast+L_2^\ast\cos(\theta_{d2})} \bigg), \label{theta_d1}  \\
		&\theta_{d2}=\arctan\bigg( \frac{r_d^2-{L_1^\ast}^2-{L_2^\ast}^2}{2L_1^\ast L_2^\ast}\bigg). \label{theta_d2}
	\end{align}
	
	The comparative experimental results for the LQR+feedforward, the backstepping method are shown in Fig. \ref{fig:complete_tracking_results}.
	Additionally, both the tip polar radial and tip polar angle tracking errors under the backstepping is smaller than those of the LQR+Feedforward method, and furthermore, both proposed controllers achieve faster convergence to zero of the end-effector tracking error of 2DSFMR under the square and sawtooth trajectories.
	\begin{Rem}
		The practical reference trajectories for the joints differ slightly from the ideal profiles $\theta_{d1}(t^\ast)$ and $\theta_{d2}(t^\ast)$ around discontinuous points, due to smoothing imposed to meet the speed and acceleration constraints of the motors.
	\end{Rem}
	
	\section{Conclusion}\label{Conclusion}
	In this article, we propose a backstepping output-feedback control framework for fast vibration suppression and tip trajectory tracking of an n-degree-of-freedom serial flexible manipulator robot (nDSFMR). We begin by analyzing the $i$-th link-joint of the nDSFMR, which is modeled as a slender Timoshenko beam coupled with an ODE derived via Hamilton's principle. Based on this dynamic model, we first develop a backstepping state-feedback controller. This is subsequently extended to an output-feedback approach by designing an observer that reconstructs the unmeasured states from only boundary measurements of the base strain $E_{bi}(t)$ and the joint angle $\theta_i(t)$.
	Finally, the experimental validation is conducted on a physical 2DSFMR platform. Comparative results against the baseline LQR+feedforward method demonstrate the superior capability of our proposed strategy in achieving fast vibration suppression and trajectory tracking.
	
	Future work will incorporate adaptive control to handle sudden payload uncertainties (e.g., during end-effector grasping) and integrate optimal motion planning to avoid high-gradient or discontinuous points in reference trajectories that challenge actuator limits.
	
	\section*{Appendix}\label{appendix}
	\setcounter{subsection}{0}
	\setcounter{section}{0}
	
	\subsection{Expression of $h_1\thicksim H_7$}\label{h}
	\setcounter{equation}{0}
	\renewcommand{\theequation}{A.\arabic{equation}}
	Expressions of  $h_1$, $h_2$, $h_3$, $h_4$, $h_5$, $H_6$, $H_7$ in \eqref{bet} are shown as follows
	\begin{align}
		h_1=&\frac{c_i}{J_i}-\frac{\rho(1,1)}{\sqrt{\epsilon_i}}, \quad
		h_2=\frac{1}{\sqrt{\epsilon_i}}\rho(1,0)+\lambda(1)B_i, \notag \\
		h_3=&\frac{1}{\sqrt{\epsilon_i}}\sigma(1,1)+\frac{c_i}{J_i}, \quad h_4=-\frac{1}{\sqrt{\epsilon_i}}\sigma(1,0), \notag \\
		h_5=&-\frac{b_i^2}{2\sqrt{\epsilon_i}}\int_{0}^{1}\int_{0}^{y}\cosh(b_i(y-z))\sigma(1,y)\lambda(z)dzdy \notag \\
		&-\frac{c_i}{J_i}\lambda(1)+\lambda(1)(A_i+B_iK_i), \notag \\
		H_6(y)=& -\frac{b_i^2}{2\sqrt{\epsilon_i} }\int_{y}^{1}\bigg(\int_{y}^{z}\cosh(b_i(z-\tau))\sigma(\tau,y)d\tau \notag \\
		&+\cosh(b_i(z-y)) \bigg)\sigma(1,z)dz+\frac{c_i}{J_i}\rho(1,y) \notag \\
		&+\frac{1}{\sqrt{\epsilon_i}}\rho_y(1,y), \notag \\
		H_7(y)=&-\frac{b_i^2}{2\sqrt{\epsilon_i} }\int_{y}^{1}\bigg( \int_{y}^{z}\cosh(b_i(z-\tau))\sigma(\tau,y)d\tau \notag \\
		&-\cosh(b_i(z-y)) \bigg)\sigma(1,z)dz-\frac{1}{\sqrt{\epsilon_i}}\sigma_y(1,y)\notag \\
		&+\frac{c_i}{J_i}\sigma(1,y).
	\end{align}
	
	\subsection{Expression of $n_1\thicksim N_7$}\label{n}
	\setcounter{equation}{0}
	\renewcommand{\theequation}{B.\arabic{equation}}
	Expressions of  $n_1$, $n_2$, $n_3$, $n_4$, $n_5$, $N_6$, $N_7$ in \eqref{U_origin} are given as follows
	\begin{align}
		n_1=&-\frac{1}{2\sqrt{\epsilon_i}R_i}(\acute{c}_i+h_1),\quad n_2=-\frac{1}{2\sqrt{\epsilon_i}R_i}h_3, \notag \\
		n_3=&-\frac{1}{2\sqrt{\epsilon_i}R_i}\bigg((\acute{c}_i+h_1)\gamma(1)+h_2\gamma(0)+h_4C_i+h_5\notag \\
		&+\int_{0}^{1}H_7(y)\gamma(y)dy\bigg), \notag \\
		n_4=&-\frac{1}{2\epsilon_iR_i}(h_2-h_4),\quad n_5=-\frac{1}{2\epsilon_iR_i}, \notag \\
		N_6(y)=&\frac{1}{2\sqrt{\epsilon_i}R_i}\bigg((\acute{c}_i+h_1)k(1,y)+\frac{b_i^2}{2\sqrt{\epsilon_i}}\cosh(b_i(1-y)) \notag \\
		&-H_7(y)+\int_{y}^{1}k(z,y)H_7(z)dz\bigg), \notag \\
		N_7(y)=&\frac{1}{2\sqrt{\epsilon_i}R_i}\bigg((\acute{c}_i+h_1)l(1,y)-\frac{b_i^2}{2\sqrt{\epsilon_i}}\cosh(b_i(1-y)) \notag \\
		&-H_8(y)+\int_{y}^{1}l(z,y)H_7(z)dz\bigg). \label{eq:n10}
	\end{align}
	
	\subsection{Proof of Theorem \ref{Thm:state-feedback}}\label{Proof:Theorem1}
	\setcounter{equation}{0}
	\renewcommand{\theequation}{C.\arabic{equation}}
	\textit{1)} We define a Lyapunov function $V_1(t)$ as
	\begin{align}
		V_1(t)=&\frac{\sqrt{\epsilon_i}}{2}\int_{0}^{1}e^{-\acute ax}\eta_i^2(x,t)dx+\frac{\sqrt{\epsilon_i}}{2}\int_{0}^{1}\acute he^{x}\beta_i^2(x,t)dx \notag \\
		&+\frac{1}{2}\beta_i(1,t)^2+X_i(t)^TP_1X_i(t) \label{V1}
	\end{align}
	where the positive parameters $\acute h$, $\acute a$ are to be chosen later. Recalling $A_i+B_iK_i$ is Hurwitz, there exists a matrix $P_1=P_1^T>0$ being the solution to the Lyapunov equation
	$P_1(A_i+B_iK_i)+(A_i+B_iK_i)^TP_1=-Q_1$
	for some $Q_1=Q_1^T>0$.
	
	Defining
	$$
	\Omega_1(t)=| \beta_i(1,t)|^2+| X_i(t) |^2+\| \beta_i(\cdot,t) \|^2+\| \eta_i(\cdot,t) \|^2,
	$$
	we also have
	$
	\theta_{1,1}\Omega_1(t)\leq V_1(t)\leq \theta_{1,2}\Omega_1(t)
	$
	where
	$
	\theta_{1,1}=\min\{ $ $ \frac{1}{2},\lambda_{\min}(P_1),\frac{\sqrt{\epsilon_i}\acute h}{2},\frac{\sqrt{\epsilon_i}e^{-\acute a}}{2} \} ,
	\theta_{1,2}=\max\{ \frac{1}{2},$ $\lambda_{\max}(P_1),\frac{\sqrt{\epsilon_i}\acute he}{2},\frac{\sqrt{\epsilon_i}}{2} \}.
	$
	Taking the time derivative of $V_1(t)$ along \eqref{eq:beta1}--\eqref{eq:X1_target}, \eqref{eq:bc_beta1},
	applying integral by parts, using Cauchy-Schwartz inequality, we get
	\begin{align}
		\dot{V}_1(t) &\leq -\frac{\lambda_{\min}(Q_1)}{2}| X_i(t) |^2+\frac{4| P_1B_i |^2}{\lambda_{\min}(Q_1)}\beta_i(0,t)^2 \notag \\
		&+\frac{4| P_1D_i |^2}{\lambda_{\min}(Q_1)}\ddot{\theta}_{di}(t)^2+\int_{0}^{1}\acute he^x\beta_i(x,t)\sqrt{\epsilon_i}\beta_{i,t}(x,t)dt
		\notag \\
		&+\int_{0}^{1}e^{-\acute ax}\eta_i(x,t)\sqrt{\epsilon_i}
		\eta_{i,t}(x,t)dt-\acute c_i \beta_i(1,t)^2 \notag \\
		&\leq-\bigg(\frac{\lambda_{\min}(Q_1)-\bar L_1}{2}-|C_i-K_i|^2 \bigg)| X_i(t) |^2 \notag \\
		&-\bigg(\frac{\acute h}{2}-\frac{4| P_1B_i |^2}{\lambda_{\min}(Q_1)}-1 \bigg)\beta_i(0,t)^2-(\acute{c}_i-\frac{\acute he}{2})\beta_i(1,t)^2 \notag \\
		&-\bigg(\frac{\acute a}{2}-\frac{\bar L_1}{2\acute a}-2\bar L_1 \bigg)\int_{0}^{1}e^{-\acute ax}\eta_i^2(x,t)dx \notag \\
		&-\bigg(\frac{\acute h}{2}(1-\frac{\acute G}{\acute \gamma})-\frac{\bar L_1}{2\acute a}\bigg)\int_{0}^{1}e^{x}\beta_i^2(x,t)dx\notag \\
		&+\bigg( \frac{\bar L_1+\acute he\acute G\acute \gamma}{2}+\frac{4| P_1D_i |^2}{\lambda_{\min}(Q_1)} \bigg)\ddot{\theta}_{di}(t)^2 \label{dot_V1}
	\end{align}
	where
	$\bar L_1>\max\{ |\epsilon_i(1+R_i-x)|_\infty, |\frac{b_i^2}{2}\int_{0}^{x}\cosh(b_i(x-y))\lambda(y)dy|_\infty, \frac{b_i^2}{2}\| -\cosh(b_i(x-y))+\int_{y}^{x}\cosh(b_i(x-z))\sigma(z,y)dz \|_\infty, \frac{b_i^2}{2}\| \cosh(b_i(x-y))+\int_{y}^{x}\cosh(b_i(x-z))\rho(z,y)dz \|_\infty \}$, $\acute G=|G(x)|_\infty$, and $\acute \gamma$ is a positive parameter to be chosen later.
	
	Choosing $\acute h,\acute{c}_i$, $\acute a$ and $\acute \gamma$  to satisfy
	\begin{align}
		\acute h&> \max\{ \frac{8| P_1B_i|^2}{\lambda_{\min}(Q_1)}+2,\frac{\bar L_1\acute \gamma}{(\acute \gamma-\acute G)\acute a}\},\quad \acute{c}_i>\frac{\acute he}{2}, \notag \\
		\acute a&>4\bar L_1+\sqrt{16\bar L_1^2+4\bar L_1},\quad \acute \gamma>\acute G \label{condition_V1}
	\end{align}
	we obtain
	$$
	\dot{V_1}(t)\leq  -\acute \lambda_1\theta_{1,2}\Omega_1(t)+D_\theta
	$$
	for sufficiently large $\lambda_{\min}(Q_1)$ and where
	\begin{align}
		&\acute \lambda_1=\min\bigg\{ \frac{\lambda_{\min}(Q_1)-\bar L_1}{2}-|C_i-K_i|^2, \acute{c}_i-\frac{\acute he}{2}, \notag \\
		&  \frac{\acute h}{2}-\frac{4| P_1B_i |^2}{\lambda_{\min}(Q_1)}-1,  \frac{\acute a}{2}-\frac{\bar L_1}{2\acute a}-2\bar L_1, \frac{\acute h}{2}(1-\frac{\acute G}{\acute \gamma})-\frac{\bar L_1}{2\acute a}, \notag \\
		& \frac{\bar L_1+\acute he\acute G\acute \gamma}{2}+\frac{4| P_1D_i |^2}{\lambda_{\min}(Q_1)}  \bigg\}, \label{eq:acute_lambda_1}
	\end{align}
	and $D_\theta=( \frac{\bar L_1+\acute he\acute G\acute \gamma}{2}+\frac{4| P_1D_i |^2}{\lambda_{\min}(Q_1)} )\ddot{\theta}_{di}(t)^2$ is bounded according to \eqref{eq:conditons_theta_id}.	
	
	Then from the comparison principle, we can conclude $V_1(t)< V_1(0)e^{-\acute \lambda_1 t}+\frac{D_\theta}{\acute \lambda_1}$ which implies that $\Omega_1(t)< \frac{\theta_{1,2}}{\theta_{1,1}}\Omega_1(0)e^{-\acute \lambda_1 t}+\frac{D_\theta}{\theta_{1,1}\acute \lambda_1}$ with $\lambda_{\min}(Q_1)$ influenced by $K_i$ since $P_1(A_i+B_iK_i)+(A_i+B_iK_i)^TP_1=-Q_1$.
	
	Moreover, the analysis on the higher-order terms $\|\eta_i(x,t)\|_{H^1}$, $\|\xi_i(x,t)\|_{H^1}$, which will also be required in analyzing the boundedness of the control input, is necessary. The explicit deduction has been shown in \cite{bib0}, so we only give a short proof for simplicity.
	
	Differentiating \eqref{eq:beta1}, \eqref{eq:eta1_target} with respect to $x$, differentiating \eqref{eq:bc_beta1}, \eqref{eq:bc_eta1_target} with respect to $t$, we get the equations of $\beta_{i,x}(x,t)$, $\eta_{i,x}(x,t)$. Based on these, it straightforward to obtain their explicit solutions.
	
	Defining
	$$
	V_2(t)=V_1(t)+\frac{1}{3}\int_{0}^{1}\beta_{i,x}^2(x,t)dx+\frac{1}{12}\int_{0}^{1}\eta_{i,x}^2(x,t)dx, \label{V2}
	$$
	and considering
	$$
	\Omega_2(t)=|\beta_i(1,t)\lvert^2+| X_i(t) |^2+\| \beta_i(\cdot,t) \|^2_{H^1}+\| \eta_i(\cdot,t) \|^2_{H^1},
	$$
	we have $\theta_{2,1}\Omega_2(t)\leq V_2(t)\leq \theta_{2,2}\Omega_2(t)$ for some positive $\theta_{2,1}$, $\theta_{2,2}$. Recalling \eqref{dot_V1}, using Cauchy Schwarz inequality and Young's inequality, we obtain
	\begin{align}
		&V_2(t)\leq \frac{\kappa_1}{\theta_{1,1}}\theta_{1,1}\Omega_1(t)+V_1(0)e^{-\acute \lambda_1 t}+\frac{D_\theta}{\acute \lambda_1}+(\acute G^2+ \notag \\
		&\epsilon_i|(C_i-K_i)D_i|^2+2\epsilon_i^2(1+R_i)^2+|G^{(1)}(x)|_\infty^2)\ddot{\theta}_{di}(t)^2
		\label{V2_v2}
	\end{align}
	for some positive $\kappa_1$. We thus have $V_2(t)\leq (1+\frac{\kappa_1}{\theta_{2,1}})$ $V_2(0)e^{-\acute \lambda_1t}+\mathcal{C}_1$ where
	\begin{align}\
		\mathcal C_1=&(\acute G^2+\epsilon_i|(C_i-K_i)D_i|^2+|G^{(1)}(x)|_\infty^2+2\epsilon_i^2(1+R_i)^2) \notag \\
		&\ddot{\theta}_{di}(t)^2+(1+\frac{\kappa_1}{\theta_{2,1}})\frac{D_\theta}{\acute \lambda_1}. \label{eq:math_C1}
	\end{align}
	It means that $\Omega_2(t)\leq \frac{\theta_{2,2}}{\theta_{2,1}}(1+\frac{\kappa_1}{\theta_{1,1}})\Omega_2(0)e^{-\acute \lambda_1 t}+\frac{\mathcal{C}_1}{ \theta_{2,1}}$.
	Additionally, recalling \eqref{eq:bc_alpha}, \eqref{varpi}, \eqref{det_theta1}, and \eqref{xi1}--\eqref{acute x1}, $\Delta \theta_i$ is expressed as
	$$
	\Delta \theta_i(t)=\frac{1}{2R_i}\int_{0}^{1}(\xi_i(x,t)-\eta_i(x,t))dx+\frac{\acute x_i}{R_i},
	$$
	and it is straightforward to get
	\begin{align}
		|\Delta \theta_i(t)|^2\leq {\kappa}_3(\|\eta_i(\cdot,t)\|^2+|X_i(t)|^2+\|\xi_i(\cdot,t)\|^2) \label{eq:deta_theta1}
	\end{align}
	for some positive $\kappa_3$.
	Thus, we get $\Delta \theta_i(t)$ is bounded. Next, defining
	\begin{align}
		\Xi_1(t)=&| \Delta \theta_i(t)|^2+| \Delta \dot{\theta}_i(t)|^2+| X_i(t)|^2+\| \xi_i(\cdot,t)\|^2_{H^1} \notag \\
		&+\| \eta_i(\cdot,t)\|^2_{H^1}, \label{Xi1}
	\end{align}
	applying Cauchy-Schwarz inequality, recalling transformations \eqref{Trans:beta1}, \eqref{Trans:xi1}, and \eqref{V2_v2}, \eqref{eq:deta_theta1} we get
	$
	\theta_{2,3}\Xi_1(t)\leq \Omega_2(t)\leq \theta_{2,4}\Xi_1(t)
	$
	for some positive $\theta_{2,3}$ and $\theta_{2,4}$. Therefore we have
	\begin{align}
		\Xi_1(t)\leq& \frac{\theta_{2,2}\theta_{2,4}}{\theta_{2,1}\theta_{2,3}}(1+\frac{\kappa_1}{\theta_{1,1}})\Xi_1(0)e^{-\acute \lambda_1 t}+\frac{\mathcal{C}_1}{\theta_{2,1}\theta_{2,3}}.
	\end{align}
	Thus, recalling \eqref{V2_v2}, \eqref{V1},  \eqref{Omeg0} is achieved with
	\begin{align}
		\Upsilon_0=&\frac{\theta_{2,2}\theta_{2,4}}{\theta_{2,1}\theta_{2,3}}(1+\frac{\kappa_1}{\theta_{1,1}}),\quad \acute{C}_0=\acute \lambda_1,\quad
		\mathcal{C}_0=\frac{\mathcal{C}_1}{\theta_{2,1}\theta_{2,3}}.  \label{mathC0}
	\end{align}
	
	\textit{ 2):}
	Applying Cauchy Schwarz inequality into \eqref{U_origin}, recalling \eqref{eq:n10}, together with Properties 1, Property 2 is obtained.
	
	The proof of Theorem \ref{Thm:state-feedback} is complete.

\end{document}